%% file: arxiv.tex
\documentclass[twoside]{article}
\usepackage[preprint]{aistats2027}
\usepackage{tabularx}

\usepackage[round]{natbib}
\renewcommand{\bibname}{References}
\renewcommand{\bibsection}{\subsubsection*{\bibname}}
\usepackage{graphicx}

\input{math_commands.tex}

\usepackage{pifont}

\usepackage{url}
\usepackage{bbm}
\usepackage{amssymb}
\usepackage{booktabs}
\usepackage{booktabs}
\usepackage{makecell}
\usepackage{threeparttable}
\usepackage{adjustbox}
\usepackage{amsthm}
\usepackage{thmtools}

\usepackage{xcolor}
\usepackage{hyperref}

\usepackage[most]{tcolorbox}
\definecolor{limitationred}{RGB}{150,40,45}
\newtcolorbox{limitationbox}{
    enhanced,
    breakable,
    colback=limitationred!4,
    colframe=limitationred!65,
    boxrule=0.6pt,
    arc=2pt,
    left=6pt,
    right=6pt,
    top=6pt,
    bottom=6pt,
    before skip=8pt,
    after skip=8pt,
    title={Limitations of single-adjustment strategies},
    fonttitle=\bfseries,
    coltitle=black
}

\def\eps{\varepsilon}

\newtheorem{proposition}{Proposition}
\newtheorem{lemma}{Lemma}

\newtheorem{stabilitycondition}{Stability Condition}
\newcounter{identificationassumption}
\renewcommand{\theidentificationassumption}{A\arabic{identificationassumption}}
\theoremstyle{remark}
\newtheorem{remark}{Remark}

\newcommand{\papertitle}{Doubly Robust Learning of Heterogeneous Survival Treatment Effects under Time-Varying Treatments}
\newcommand{\paperauthors}{%
\textbf{Tom\`as Garriga}$^{1,2,3,*}$,
\textbf{Munib Mesinovic}$^{4}$,
\textbf{Luke De Bretton-Gordon}$^{4}$,\\
\textbf{Gerard Sanz}$^{1}$,
\textbf{Axel Brando}$^{2}$,
\textbf{Tingting Zhu}$^{4}$
}

\newcommand{\paperaddresses}{%
$^1$Novartis\\
$^2$Barcelona Supercomputing Center\\
$^3$Universitat Polit\`ecnica de Catalunya\\
$^4$University of Oxford\\
$^*$Corresponding author: \texttt{tomas.garriga\_dicuzzo@novartis.com}
}

\begin{document}

\runningauthor{Garriga et al.}

\twocolumn[
\aistatstitle{\papertitle}
\aistatsauthor{\paperauthors}
\aistatsaddress{\paperaddresses}
]

\begin{abstract}
Estimating heterogeneous survival treatment effects (HSTEs) from time-to-event data under \textit{time-varying treatments} is important in fields such as medicine (e.g., to decide which sequence of treatments is better in terms of survival probability). However, this task is challenging because treatment decisions, patient characteristics, and censoring probability often evolve together, requiring \textit{sequential adjustment} for both treatments and censoring. Existing survival learners mostly consider a single baseline treatment assignment, while longitudinal treatment-effect learners typically assume uncensored outcomes. In this paper, we introduce a novel doubly robust learner for heterogeneous survival treatment effects under time-varying treatments and history-dependent censoring. Our framework separates the intervention window from the survival horizon, with treatment prescribed for a finite period, after which care follows its natural course. We further establish the favorable theoretical properties of Neyman-orthogonality, rate double robustness and quasi-oracle efficiency. The framework accommodates flexible machine learning models for nuisance estimation and treatment-effect regression. We evaluate our learners through several numerical experiments, demonstrating their effectiveness.
\end{abstract}

\section{Introduction}

In medicine, personalized treatment decisions rarely occur only once in time. In many clinical settings \citep[e.g.,][]{meneghini2007usage, press2016response}, treatments are administered repeatedly over time, while the outcome of interest is a time-to-event endpoint such as death \citep{huang2014optimization}, relapse \citep{punt2007endpoints}, or disease progression \citep{walia2023progression}. Estimating how alternative treatment sequences affect survival for a patient with a particular observed history is therefore central to individualized decision-making. In this paper, \textit{\textbf{we aim to estimate heterogeneous survival treatment effects (HSTEs) under time-varying treatments and history-dependent censoring}}.

\begin{figure*}[t]
    \centering
    \includegraphics[width=\linewidth]{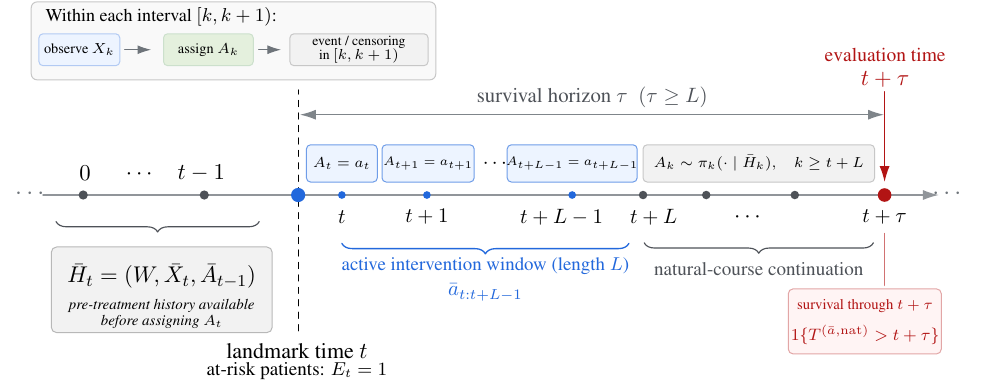}
    \caption{
        Longitudinal intervention and survival estimand. At landmark \(t\), treatment is prescribed according to \(\bar a_{t:t+L-1}\) for \(L\) decision times. Treatment subsequently follows its natural assignment mechanism, while survival is evaluated at \(t+\tau\), where \(\tau\ge L\).
    }
    \label{fig:survival-setting}
\end{figure*}

This task poses several fundamental challenges.
First, treatment decisions may depend on time-varying patient characteristics that are themselves affected by previous treatments. This produces \textit{time-varying confounding} and invalidates standard adjustment strategies that condition only on baseline information \citep{robins1986new,robins2000marginal,hernanrobins2025whatif}, requiring sequential adjustment strategies instead. Second, survival outcomes are subject to right-censoring. In a time-varying setting, the probability of remaining under observation may depend on the evolving patient history, including treatment-affected prognostic variables. Consequently, censoring may be informative given the history available at the initial treatment decision, even when it is conditionally independent of the potential event time given the evolving history. This makes it necessary to adjust for censoring \textit{sequentially} as well, making the problem more challenging than standard causal survival settings with a single baseline treatment \citep[e.g.,][]{klein2003survival, frauen2025survival}. Finally, the clinically relevant intervention and outcome windows need not coincide: a protocol may prescribe treatment for a finite period, while its effect on survival is evaluated at a later evaluation time after subsequent care has returned to its natural course.

Most existing methods for heterogeneous treatment effects estimation capture only parts of these challenges.  Causal survival learners typically handle a single baseline treatment \citep[e.g.,][]{curth2021survite,cui2023causal,frauen2025survival}, where censoring is assumed conditionally independent of the event time given a fixed baseline information set and can therefore be handled without sequential censoring adjustment. In our time-varying setting, in contrast, censoring may depend on evolving, treatment-affected histories, requiring adjustment throughout follow-up. On the other hand, longitudinal causal learners handle time-varying treatments but generally assume fully observed outcomes \citep{RMSN,gnet,melnychuk2022causal,frauen2025modelagnostic}, so they do not handle censoring. A small literature comes closer to HSTE estimation under time-varying treatments, with \citet{abraich2025tv} being our closest work. However, this work remains tied to a particular model architecture, relies on a single nuisance component and does not provide favorable learning guarantees such as double robustness.


In this paper, we develop a model-agnostic framework for learning HSTEs under time-varying treatments and history-dependent censoring. At a landmark time $t$, we compare survival under alternative treatment sequences imposed for $L$ decision times, followed by natural-course treatment, with the evaluation time $t+\tau$ potentially later (see Fig. \ref{fig:survival-setting}). This separation allows us to study finite, clinically actionable interventions. We derive complementary identification strategies based on g-computation and inverse-probability weighting.

Building on these representations, we introduce a novel learner that has several favorable learning properties: double robustness \citep{kang2007demystifying}, which allows consistency under misspecification of one of two complementary nuisance components; Neyman-orthogonality \citep{foster2023orthogonal}, which removes first-order sensitivity to nuisance-estimation error; and quasi-oracle efficiency \citep{nie2021quasi}, whereby under suitable nuisance-rate conditions the learner matches the convergence rate of its oracle counterpart. Thus, the learner combines longitudinal continuation regressions with treatment and censoring mechanisms in an orthogonal pseudo-outcome, and remains consistent if either the continuation regressions are correctly specified or both probability mechanisms are correctly specified. Its construction permits flexible machine-learning models for nuisance estimation and the final effect regression.

Our \textbf{contributions} are threefold:\footnote{Code is available at \url{https://github.com/tgarriga/SurvivalTimeDR}.}
\textbf{(1)} we formulate and identify heterogeneous survival treatment effects for finite-duration longitudinal interventions followed by natural-course treatment under history-dependent censoring, and derive several baseline learners based on complementary adjustment strategies.
\textbf{(2)} we introduce a novel doubly robust and Neyman-orthogonal learner for HSTE estimation under time-varying treatments and censoring.
\textbf{(3)} we establish its theoretical properties, including (rate) double robustness, Neyman-orthogonality and quasi-oracle efficiency, and validate the proposed learner through synthetic, semi-synthetic, and real-world experiments.

\section{Related Work\protect\footnotemark}
\footnotetext{We provide an extended related work in App. \ref{app:extended-related-work}}

\textbf{HSTE estimation with static treatments.} Several approaches have been proposed to estimate HSTEs under a single treatment assignment using baseline covariates. For example, \citet{curth2021survite} learn treatment-specific hazard functions through representation balancing, while \citet{cui2023causal} use adaptive forest neighborhoods. More recently, \citet{frauen2025survival}, \citet{xu2024estimating}, and \citet{morzywolek2023general} develop model-agnostic, doubly robust and Neyman-orthogonal survival learners. However, the previous causal methods are limited to a single treatment assignment and assume independent censoring conditional on baseline covariates. In contrast, our setting considers time-varying treatments and allows censoring to depend on the evolving patient history, requiring only sequential independent censoring. This makes the previous methods unsuitable for our task.

\textbf{HSTE estimation under time-varying treatments.} In recent years, several approaches have been proposed to estimate heterogeneous treatment effects under time-varying treatments \citep[e.g.,][]{melnychuk2022causal, gnet, frauen2025modelagnostic, garriga2026orthogen}. However, \textit{these methods do not account for censoring} and thus are not suitable for HSTE estimation. A limited literature lies at the intersection of heterogeneous effect estimation, survival outcomes, and time-varying treatments. \citet{zhu2022causal} estimate conditional survival effects from longitudinal data using recurrent neural networks. However, they explicitly assume that future covariates are independent of previous treatments, thereby excluding treatment--confounder feedback. \citet{jun2023optimizing} study heterogeneous survival effects across sequential antibiotic decisions using causal survival forests. However, they focus on an application-specific problem rather than developing a general learner for HSTE estimation under time-varying treatments. Closest to our setting, \citet{abraich2025tv} directly considers counterfactual survival under time-varying treatment sequences in the presence of treatment--confounder feedback, and targets conditional survival contrasts between alternative treatment sequences. Their approach relies on sequential representation balancing and a discriminative survival model. However, it lacks the favorable properties of double robustness and Neyman-orthogonality, and is therefore first-order sensitive to nuisance misspecification.

\textbf{Research Gap.} To the best of our knowledge, an HSTE learner that has the favorable properties of double robustness and Neyman-orthogonality while properly handling time-varying confounding and history-dependent censoring is missing.

\section{Problem Setting}

\textbf{Notation.}
Capital letters denote random variables and lowercase letters their realizations. We write \(\mathbb{P}(\cdot)\) for probabilities and \(\mathbb{E}[\cdot]\) for expectations. We use \(\widehat{\mathbb E}[Z\mid X=x]\) to denote a fitted conditional-mean regression of \(Z\) on \(X\), not an empirical average. Estimated quantities carry a hat, as in \(\widehat f\). Time is discrete and indexed by \(k\in\mathbb{N}_0\). For a process \(U\), we write \(\bar U_k:=(U_0,\ldots,U_k)\) for its trajectory up to \(k\) and \(\bar U_{r:s}:=(U_r,\ldots,U_s)\) for a contiguous segment. Throughout, \(W\) denotes static baseline covariates, \(X_k\) time-varying covariates, \(A_k\in\mathcal{A}\) the treatment assigned at decision time \(k\), and \(T,C\in\{1,2,\ldots\}\) the event and censoring times, respectively. We denote by
$\bar{H}_k:=\left(W,\bar X_k,\bar A_{k-1}\right)$
the pre-treatment history, i.e.\ all information available immediately before assigning \(A_k\), and by
$E_k:=\mathbbm{1}\{T>k,C>k\}$
the at-risk indicator. Potential quantities under a finite treatment sequence \(\bar a\) followed by natural-course treatment are written with a superscript, as in \(T^{(\bar a,\mathrm{nat})}\).

\textbf{Setup.}
We consider a discrete-time longitudinal setting with i.i.d.\ patient trajectories and no common maximum follow-up length. Within each interval, we assume a strict temporal ordering: at the beginning of interval \(k\) the covariates \(X_k\) of patients at risk are recorded, treatment \(A_k\) is assigned, and the patient is then monitored for the event or for censoring during \([k,k+1)\). Active treatment \(A_k\) is thus observed only for patients with \(E_k=1\). A patient censored exactly at time \(c\) still contributes the terminal covariates \(X_c\), and hence the successor history \(\bar{H}_c\), but neither receives \(A_c\) nor contributes any later variable; conversely, an event preempts the next covariate measurement. Writing \(\widetilde T:=\min\{T,C\}\) for the terminal follow-up time and \(\Delta:=\mathbbm{1}\{T\leq C\}\) for the event indicator, the observed data for patient \(i\) are $O_i=
\left(
W_i,
\{X_{ik}\}_{k=0}^{\widetilde T_i-\Delta_i},
\{A_{ik}\}_{k=0}^{\widetilde T_i-1},
\widetilde T_i,
\Delta_i
\right),
 i=1,\ldots,n$.

\textbf{Finite-duration interventions with natural-course continuation.}
Acute clinical protocols prescribe treatment only for a \textit{limited period}. We therefore fix a landmark time \(t\), an \textit{active intervention} length \(L\geq 1\), and a \textit{survival horizon} \(\tau\geq L\), and restrict attention to patients with \(E_t=1\). Within the follow-up window, \(\delta=0,\ldots,\tau-1\) denotes the relative time offset, corresponding to absolute time \(t+\delta\). The final prescribed decision occurs at \(t+L-1\), and the evaluation time is \(t+\tau\). A deterministic sequence
$
\bar a_{t:t+L-1}:=(a_t,\ldots,a_{t+L-1})\in\mathcal{A}^{L}
$
is applied during the \(L\) intervals \([t,t+1),\ldots,[t+L-1,t+L)\). Afterwards no further treatment is imposed and care evolves according to the natural assignment mechanism observed in the data: $\pi_k(a\mid \bar{h}_k):=\mathbb{P}(A_k=a\mid \bar{H}_k=\bar{h}_k,E_k=1), k\in\mathbb{N}_0$.
We write \(T^{(\bar a_{t:t+L-1},\mathrm{nat})}\) for the potential event time under this regime. Fig. \ref{fig:survival-setting} shows our problem setting.

\textbf{Heterogeneous survival target.}
We target survival at a prespecified survival horizon. This choice corresponds directly to clinical endpoints such as 10-day survival. Our objective is therefore a horizon-specific counterfactual survival effect. Suppressing the natural-course continuation from the superscript, our target is the conditional survival function
\begin{equation}
\begin{aligned}
S_{t,\tau}^{\bar a}(\bar{h}_t)
&:=\mathbb{P}\!\left(
T^{(\bar a_{t:t+L-1},\mathrm{nat})}>t+\tau
\mid
\right.\left.
\bar{H}_t=\bar{h}_t,E_t=1
\right),
\end{aligned}
\end{equation}
that is, the probability that a patient at risk at the landmark with history \(\bar{h}_t\) survives through \(t+\tau\) when treated according to \(\bar a_{t:t+L-1}\) and then following the natural course. For \(\tau=L\) the evaluation time coincides with the end of the intervention window, whereas \(\tau>L\) additionally captures its downstream consequences. For two candidate sequences \(\bar a_{t:t+L-1}\) and \(\bar b_{t:t+L-1}\), the heterogeneous survival treatment effect is
\begin{equation}
\label{eq:target-estimand}
\theta_{t,\tau}^{\bar a,\bar b}(\bar{h}_t):=S_{t,\tau}^{\bar a}(\bar{h}_t)-S_{t,\tau}^{\bar b}(\bar{h}_t).
\end{equation}
Throughout the methodological development, \(t\), \(L\), and \(\tau\) are fixed,
and we therefore write \(S^{\bar a}\) and \(\theta^{\bar a,\bar b}\) when no
ambiguity arises.

\textbf{Identification.}
For the treatment process, we assume the standard identification conditions for longitudinal interventions: \refstepcounter{identificationassumption}\label{ass:consistency}\textit{(\theidentificationassumption) Consistency}, if the observed treatment path over the active intervention window agrees with $\bar a_{t:t+L-1}$, then the corresponding potential event time equals the observed event time; \refstepcounter{identificationassumption}\label{ass:treatment-exchangeability}\textit{(\theidentificationassumption) Sequential treatment exchangeability}, $A_{t+\delta}\perp\!\!\!\perp T^{(\bar a_{t:t+L-1},\mathrm{nat})}\mid \bar H_{t+\delta},E_{t+\delta}=1$, for $\delta=0,\ldots,L-1$; and \refstepcounter{identificationassumption}\label{ass:treatment-positivity}\textit{(\theidentificationassumption) Treatment positivity}, $\pi_{t+\delta}(a_{t+\delta}\mid\bar H_{t+\delta})>0$ almost surely on histories compatible with the intervention, for $\delta=0,\ldots,L-1$ \citep{robins1986new,robins2000marginal,melnychuk2022causal,hernanrobins2025whatif}.

Our setting additionally requires assumptions on the censoring process. Unlike static treatment survival settings where independent censoring is assumed conditional on a fixed baseline information set, we allow censoring to depend on the evolving observed history, including treatment-affected covariates. Works like \citet{robins2000marginal,hernanrobins2025whatif,keogh2023causal} introduce similar settings. Specifically, we assume \refstepcounter{identificationassumption}\label{ass:sequential-ignorability}\textit{(\theidentificationassumption) Sequential independent censoring}: for $\delta=0,\ldots,\tau-2$, $\mathbbm{1}\{C>t+\delta+1\}
\perp\!\!\!\perp
T^{(\bar a_{t:t+L-1},\mathrm{nat})}
\mid
\bar H_{t+\delta+1},\,
T>t+\delta+1,\,
C>t+\delta$. Under our observation convention, this means that, among patients who survive through $t+\delta+1$ and whose successor history $\bar H_{t+\delta+1}$ is observed, remaining under follow-up carries no additional information about the potential event time once the up-to-date history is conditioned upon. Censoring may therefore be informative given the landmark history $\bar H_t$, while remaining conditionally independent of the potential event time given the evolving history. Finally, \refstepcounter{identificationassumption}\label{ass:sequential-positivity}\textit{(\theidentificationassumption) Censoring positivity} requires $\mathbb{P}(C>t+\delta+1\mid\bar H_{t+\delta+1},T>t+\delta+1,C>t+\delta)>0$ almost surely on the relevant histories, for $\delta=0,\ldots,\tau-2$. Treatment exchangeability and positivity are required only over the intervention window $t,\ldots,t+L-1$, whereas sequential independent censoring and censoring positivity must hold throughout the follow-up needed to determine survival at $t+\tau$.

Under the stated assumptions, our estimands $S^{\bar a}_{t,\tau}(\bar h_t)$ and $\theta^{\bar a,\bar b}_{t,\tau}(\bar h_t)$ are identified from the
observed data. We derive the corresponding identification
representations in Sec.~\ref{sec:naive_learners}.

\textbf{The challenges of our setting.} Our setting combines two specific challenges:
time-varying treatment--confounder feedback and censoring that is itself
time-varying and history-dependent.

$\bullet$ \textbf{Time-varying confounding.}
At decision time $t+\delta$, treatment may depend on the evolving history
$\bar H_{t+\delta}$, which contains prognostic covariates that can themselves
be affected by earlier treatments. These future covariates are unavailable at
the landmark, so adjustment for $\bar H_t$ alone cannot remove confounding of
subsequent treatment decisions. Conversely, conditioning on their realized
values would condition on post-treatment trajectories rather than marginalize
over the trajectories induced by the intervention. Identification therefore
requires sequential adjustment over the active intervention window
$t,\ldots,t+L-1$, for example through g-computation or treatment weighting \citep{hess2026igcnet, RMSN, frauen2025modelagnostic}.

$\bullet$ \textbf{History-dependent censoring.}
In our setting with time-varying treatments, censoring poses a qualitatively different challenge from that in standard
causal survival learning with a single baseline treatment. There, censoring
may occur throughout follow-up, but independent censoring is typically assumed
conditional on a \emph{fixed} information set, such as baseline covariates and
the baseline treatment, e.g.,
$T \perp\!\!\!\perp C \mid X,A$
\citep{frauen2025survival}.
In our longitudinal setting, by contrast, the probability of remaining
observed at time $t+\delta$ may depend on the evolving history
$\bar H_{t+\delta+1}$, including prognostic covariates affected by earlier
treatments. Hence censoring may be informative conditional on the landmark
history $\bar H_t$, even when it is conditionally independent of the potential event time given the
up-to-date history at each censoring decision \citep{keogh2023causal, robins2000marginal}. As a result, censoring cannot be adjusted for using a single baseline
censoring model. Instead, observation through $t+\tau$ is the result of a
sequence of history-dependent selection decisions, requiring stage-specific
censoring-continuation probabilities over the evolving risk sets. In
particular,
\begin{equation}
\begin{aligned}
S^{\bar a}(\bar h_t)
\neq\;
\mathbb{P}\!\Big(
&T > t+\tau
\mid
\bar H_t=\bar h_t,\,
E_t=1,\\
&\bar A_{t:t+L-1}=\bar a_{t:t+L-1},\,
C\geq t+\tau
\Big).
\end{aligned}
\end{equation}
Moreover, because the histories driving censoring may themselves be affected
by treatment, the censoring and treatment-assignment processes are coupled.
Treatment adjustment is required only over the intervention window
$t,\ldots,t+L-1$, whereas sequential censoring adjustment must continue until
survival at $t+\tau$ can be determined.

\section{Identification strategies}
\label{sec:naive_learners}

Following the potential outcomes framework \citep{hernanrobins2025whatif}, we now derive two complementary identification strategies for HSTEs under time-varying treatments, history-dependent censoring and finite-duration interventions, based on g-computation and inverse-probability weighting. These strategies yield baseline HSTE learners and form the building blocks of our doubly robust learner in Sec. \ref{sec:orthogonal_estimator}.

\subsection{Continuation-regression representation}
\label{sec:continuation_nuisances}

For a fixed intervention sequence \(\bar a\), the recursion carries information from the evaluation time back to the landmark. For \(\delta=0,\ldots,\tau-1\), let \(Y_{t+\delta+1}:=\mathbbm{1}\{T>t+\delta+1\}\) be the one-step survival indicator, and set \(m_{t+\tau}\equiv1\) as the terminal boundary condition. In full-horizon formulas, we use \(m_{t+\delta}^{\bar a}:=m_{t+\delta}\) for \(\delta\geq L\), including the boundary \(m_{t+L}^{\bar a}:=m_{t+L}\). At time \(t+\delta\), the action-value function combines survival through the next interval with the continuation survival thereafter:
\begin{equation}
\begin{aligned}
Q_{t+\delta}^{\bar a}(\bar{h}_{t+\delta},a)
&:=
\mathbb{E}\!\left[
Y_{t+\delta+1}m_{t+\delta+1}^{\bar a}(\bar{H}_{t+\delta+1})
\mid
\right.\\[-1mm]
&\qquad\left.
\bar{H}_{t+\delta}=\bar{h}_{t+\delta},
A_{t+\delta}=a,
E_{t+\delta}=1
\right],
\end{aligned}
\label{eq:action_value}
\end{equation}
When an event occurs during \([t+\delta,t+\delta+1)\), \(Y_{t+\delta+1}=0\), so the product vanishes and the unobserved successor history is irrelevant. Thus \(Q_{t+\delta}^{\bar a}\) is the probability of surviving through the evaluation time given the current history, the current treatment, and the continuation induced by \(\bar a_{t:t+L-1}\).

After the intervention ends, the successor continuation and treatment mechanism are common across regimes. Thus, at a given history and action, \(Q_{t+\delta}^{\bar a}=Q_{t+\delta}^{\bar b}=:Q_{t+\delta}\) for \(\delta=L,\ldots,\tau-1\). Averaging this action value over natural-course treatment gives the shared continuation, for \(\delta=L,\ldots,\tau-1\):
\begin{equation}
\begin{aligned}
m_{t+\delta}(\bar{h}_{t+\delta})
&:=\mathbb{E}\!\left[
Q_{t+\delta}(\bar{h}_{t+\delta},A_{t+\delta})
\right.\\[-1mm]
&\quad\left.
\mid \bar{H}_{t+\delta}=\bar{h}_{t+\delta},E_{t+\delta}=1
\right].
\end{aligned}
\label{eq:natural-continuation}
\end{equation}
During the active intervention, for \(\delta=0,\ldots,L-1\), we instead set
\begin{equation}
m_{t+\delta}^{\bar a}(\bar{h}_{t+\delta})
:=Q_{t+\delta}^{\bar a}(\bar{h}_{t+\delta},a_{t+\delta}).
\label{eq:continuation_survival}
\end{equation}
The common tail \(m_{t+\tau}\to\cdots\to m_{t+L}\) is followed by regime-specific updates through \(m_{t+L-1}^{\bar a}\to\cdots\to m_t^{\bar a}\). We call \(\{m_{t+\delta}^{\bar a}\}_{\delta=0}^{L-1}\) and \(\{m_{t+\delta}\}_{\delta=L}^{\tau-1}\) the \textit{\textbf{continuation nuisances}}; the action values \(Q_{t+\delta}^{\bar a}\) construct them. This approach adapts iterative g-computation \citep{hernanrobins2025whatif, hess2026igcnet} to our setting, and the following result proves identifiability.

\begin{proposition}
\label{prop:gcomp_identification}
Let \(\{Q_{t+\delta}^{\bar a}\}_{\delta=0}^{\tau-1}\), \(\{m_{t+\delta}^{\bar a}\}_{\delta=0}^{L-1}\), and \(\{m_{t+\delta}\}_{\delta=L}^{\tau}\) be defined by equations~(\ref{eq:action_value})--(\ref{eq:continuation_survival}) with \(m_{t+\tau}\equiv1\). Then, under (\ref{ass:consistency})--(\ref{ass:sequential-positivity}), for every landmark history \(\bar{h}_t\) with \(E_t=1\),
\begin{equation}
m_t^{\bar a}(\bar{h}_t)
=
S^{\bar a}(\bar{h}_t).
\end{equation}
\end{proposition}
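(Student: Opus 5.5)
The plan is a backward induction over $\delta=\tau,\tau-1,\ldots,0$ on a stronger, stage-wise claim. For each $\delta$ and each history $\bar h_{t+\delta}$ that is compatible with the prescribed prefix $\bar a_{t:t+\delta-1}$ (when $\delta\le L$) and has at-risk indicator $E_{t+\delta}=1$, I would show that
\begin{equation*}
m^{\bar a}_{t+\delta}(\bar h_{t+\delta})=\mathbb{P}\big(T^{(\bar a,\mathrm{nat})}>t+\tau\mid \bar H_{t+\delta}=\bar h_{t+\delta},\,E_{t+\delta}=1\big),
\end{equation*}
with $m^{\bar a}_{t+\delta}=m_{t+\delta}$ for $\delta\ge L$. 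The base case $\delta=\tau$ holds because $m_{t+\tau}\equiv1$ and $T>t+\tau$ on the conditioning event. Taking $\delta=0$ gives the proposition.

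For the inductive step from $\delta+1$ to $\delta$, I would start from the definition of $Q^{\bar a}_{t+\delta}$ and handle censoring first. On the event $\{Y_{t+\delta+1}=1\}$ the successor history $\bar H_{t+\delta+1}$ is observed; this is exactly the observation convention that censored patients still contribute $X_c$. The inductive hypothesis, however, is stated conditionally on $E_{t+\delta+1}=1$, i.e.\ on $C>t+\delta+1$. This is where (A4) enters. Conditional on $\bar H_{t+\delta+1}$, $T>t+\delta+1$ and $C>t+\delta$, the indicator $\mathbbm 1\{C>t+\delta+1\}$ is independent of the potential event time. Hence the conditional law of $T^{(\bar a,\mathrm{nat})}$ is the same whether or not we additionally condition on $C>t+\delta+1$. (A5) guarantees that this conditioning event has positive probability, so $m_{t+\delta+1}$ is well defined on the histories reached.

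Next I would replace the observed event $Y_{t+\delta+1}=\mathbbm 1\{T>t+\delta+1\}$ by its potential counterpart. For $\delta<L$ this uses consistency (A1) on $\{A_{t+\delta}=a_{t+\delta}\}$ together with the compatibility of earlier treatments with $\bar a$. For $\delta\ge L$ it uses the fact that the natural-course regime coincides with the observed law. The tower property then gives
\[
Q^{\bar a}_{t+\delta}(\bar h,a)=\mathbb{P}\big(T^{(\bar a,\mathrm{nat})}>t+\tau\mid \bar H_{t+\delta}=\bar h,\,A_{t+\delta}=a,\,E_{t+\delta}=1\big).
\]
The step from $Q$ to $m$ splits into two cases:
\begin{itemize}
\item[(i)] For $\delta<L$, set $a=a_{t+\delta}$ and remove the conditioning on $A_{t+\delta}$ via sequential exchangeability (A2). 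Positivity (A3) ensures that the conditional expectation is defined on the relevant histories.
\item[(ii)] For $\delta\ge L$, average over the natural policy $\pi_{t+\delta}$ as in \eqref{eq:natural-continuation}. Because the regime follows the observed assignment mechanism, this average is exactly the law of total probability over $A_{t+\delta}$ and needs no exchangeability. This is also why $Q^{\bar a}=Q^{\bar b}$ in the tail.
\end{itemize}

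I expect the main obstacle to be the tail stages $\delta\ge L$. Exchangeability is not assumed there, so one has to argue carefully that the potential outcome $T^{(\bar a,\mathrm{nat})}$ agrees with the observed $T$ on histories consistent with $\bar a$ over the window; in particular, natural-course treatment after $t+L-1$ is ``whatever is observed.'' My approach would be to read (A1) as consistency for the whole regime, $T=T^{(\bar a,\mathrm{nat})}$ whenever $\bar A_{t:t+L-1}=\bar a$. With that reading, for $\delta\ge L$ the hypothesis involves only observed quantities on compatible histories, and steps (ii) and the censoring argument reduce to ordinary conditioning.

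A related delicate point is the event-versus-censoring ordering within an interval. When $Y_{t+\delta+1}=0$ the unobserved successor history must be irrelevant, which is guaranteed by the product form $Y_{t+\delta+1}m_{t+\delta+1}$.
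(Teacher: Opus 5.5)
Your proposal is correct and follows essentially the same route as the paper. The paper also runs a backward induction showing that $m^{\bar a}_{t+\delta}$ equals the causal continuation $\mathbb{E}[\mathbbm{1}\{T^{(\bar a,\mathrm{nat})}>t+\tau\}\mid \bar H_{t+\delta},E_{t+\delta}=1]$ on regime-compatible histories. It uses (A4)--(A5) to transport the continuation to patients censored exactly at $t+\delta+1$, consistency with temporal ordering for events, exchangeability inside the window, and the tower property over $\pi_{t+\delta}$ in the tail.

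One small point: your uniform inductive step, applied at $\delta=\tau-1$, would invoke (A4)--(A5) at censoring time $t+\tau$, but they are only assumed for $\delta\le\tau-2$. This is harmless, because there $m_{t+\tau}\equiv1$ and censoring exactly at $t+\tau$ still reveals $Y_{t+\tau}$. That is why the paper handles that first step separately, without any censoring transport.
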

\noindent\textit{Proof in App.~\ref{app:gcomp_identification}.}

In App.~\ref{app:backward-fitting} we describe a practical fitting procedure.

\subsection{Weighting representation}
\label{sec:ipw_estimator}
Inverse-probability weighting provides a complementary identification strategy by reweighting trajectories to account for both sequential treatment assignment and censoring \citep{robins2000marginal,frauen2025modelagnostic}. Treatment adjustment covers only the active intervention window \(t,\ldots,t+L-1\), while censoring adjustment continues through the follow-up needed to determine survival at \(t+\tau\). For \(\delta=0,\ldots,L-1\), define the one-step inverse-treatment factor
\begin{equation}
r_{t+\delta}^{A,\bar a}
:=\frac{\mathbbm{1}\{A_{t+\delta}=a_{t+\delta}\}}
{\pi_{t+\delta}(a_{t+\delta}\mid\bar H_{t+\delta})}.
\label{eq:treatment-one-step-factor}
\end{equation}
Its cumulative factor, for \(\delta=0,\ldots,\tau-1\), is
\begin{equation}
R_{t:t+\delta}^{A,\bar a}
:=\prod_{s=0}^{\min\{\delta,L-1\}}r_{t+s}^{A,\bar a}.
\label{eq:treatment-cumulative-factor}
\end{equation}
For \(\delta=1,\ldots,\tau-1\), retain the censoring continuation probability under our observation convention:
\begin{equation}
\begin{aligned}
G_{t+\delta}(\bar h):={}&\mathbb P\bigl(C>t+\delta\mid
\bar H_{t+\delta}=\bar h,\\
&T>t+\delta,C>t+\delta-1\bigr).
\end{aligned}
\label{eq:censoring-continuation}
\end{equation}
Define the one-step inverse-censoring factor and its cumulative product by
\begin{equation}
\begin{aligned}
r_{t+\delta}^{C}&:=G_{t+\delta}(\bar H_{t+\delta})^{-1},
&&\delta=1,\ldots,\tau-1,\\
R_{t:t+\delta}^{C}&:=\prod_{s=1}^{\delta}r_{t+s}^{C},
&&\delta=0,\ldots,\tau-1,
\end{aligned}
\label{eq:censoring-factors}
\end{equation}
with \(R_{t:t}^{C}=1\). The fitted counterparts \(\widehat r_{t+\delta}^{A,\bar a}\) and \(\widehat R_{t:t+\delta}^{A,\bar a}\) replace \(\pi\) by \(\widehat\pi\); \(\widehat r_{t+\delta}^{C}\) and \(\widehat R_{t:t+\delta}^{C}\) replace \(G\) by \(\widehat G\).

For \(\delta=0,\ldots,\tau-1\), let \(Y_{t+\delta+1}^{\mathrm{obs}}:=\mathbbm{1}\{\widetilde T>t+\delta+1\}+\mathbbm{1}\{\widetilde T=t+\delta+1,\Delta=0\}\). The resulting terminal IPW pseudo-outcome is
\begin{equation}
\Phi^{\bar a}_{\mathrm{IPW}}
:=
Y_{t+\tau}^{\mathrm{obs}}
\mathbbm{1}\{E_{t+\tau-1}=1\}
R_{t:t+\tau-1}^{A,\bar a}
R_{t:t+\tau-1}^{C}.
\label{eq:population-iptw-ipcw-pseudo-outcome}
\end{equation}
Under (\ref{ass:consistency})--(\ref{ass:sequential-positivity}), \(\mathbb{E}[\Phi^{\bar a}_{\mathrm{IPW}}\mid \bar{H}_t=\bar{h}_t,E_t=1]=S^{\bar a}(\bar{h}_t)\). Boundary conventions and nuisance estimation details are given in App.~\ref{app:ipw-details}.

\subsection{Na\"ive HSTE learners}

The previous two representations give three HSTE learners, which we call
na\"ive because each relies on only one of the two complementary nuisance
blocks and is therefore not doubly robust.

$\bullet$ \textbf{PI learner.}
The plug-in (PI) learner directly uses the continuation nuisances defined
in Sec.~\ref{sec:continuation_nuisances} to estimate HSTEs:
\begin{equation}
\widehat\theta^{\bar a,\bar b}_{\mathrm{PI}}(\bar h_t)
:=
\widehat m_t^{\bar a}(\bar h_t)
-
\widehat m_t^{\bar b}(\bar h_t).
\label{eq:gcomp-plugin-learner}
\end{equation}

$\bullet$ \textbf{RA learner.}
The regression-adjustment (RA) learner is a second-stage learner. It first
constructs an out-of-fold pseudo-outcome from the plug-in contrast:
\begin{equation}
\widehat\Phi^{\bar a,\bar b}_{\mathrm{RA}}
:=
\widehat m_t^{\bar a}(\bar H_t)
-
\widehat m_t^{\bar b}(\bar H_t).
\label{eq:gcomp-ra-pseudo-outcome}
\end{equation}
Then, using sample splitting, it fits the conditional mean of this
pseudo-outcome in a second-stage class $\mathcal F$:
$\widehat\theta^{\bar a,\bar b}_{\mathrm{RA}}(\bar h_t)
:=
\widehat{\mathbb E}\!\left[
\widehat\Phi^{\bar a,\bar b}_{\mathrm{RA}}
\mid
\bar H_t=\bar h_t,E_t=1
\right]$. This second-stage regression can be beneficial when the HSTE function is
simpler to learn than the two regime-specific conditional survival
functions separately.

$\bullet$ \textbf{IPW learner.}
The inverse-probability-weighting (IPW) learner uses the weighting
representation defined in Sec.~\ref{sec:ipw_estimator} to adjust for
treatment confounding and history-dependent censoring. Using the separate
estimated cumulative factors $\widehat R_{t:t+\delta}^{A,\bar a}$ and
$\widehat R_{t:t+\delta}^{C}$, we define
\begin{equation}
\begin{aligned}
\widehat\Phi^{\bar a}_{\mathrm{IPW}}
&:=
Y_{t+\tau}^{\mathrm{obs}}
\mathbbm{1}\{E_{t+\tau-1}=1\}
\widehat R_{t:t+\tau-1}^{A,\bar a}
\widehat R_{t:t+\tau-1}^{C},\\
\widehat\Phi^{\bar a,\bar b}_{\mathrm{IPW}}
&:=
\widehat\Phi^{\bar a}_{\mathrm{IPW}}
-
\widehat\Phi^{\bar b}_{\mathrm{IPW}}.
\end{aligned}
\label{eq:iptw-ipcw-pseudo-outcome}
\end{equation}
We then regress this pseudo-outcome on the landmark history: $
\widehat\theta^{\bar a,\bar b}_{\mathrm{IPW}}(\bar h_t)
:=
\widehat{\mathbb E}\!\left[
\widehat\Phi^{\bar a,\bar b}_{\mathrm{IPW}}
\mid
\bar H_t=\bar h_t,E_t=1
\right]$.

\textbf{Limitations of na\"ive learners.} PI and RA rely on accurate continuation nuisances, whereas IPW relies on accurate treatment and censoring mechanisms. Hence, each can be biased when its corresponding nuisance block is misspecified, and none is Neyman-orthogonal, so nuisance-estimation errors may affect the final HSTE estimate at first order. This motivates combining both representations in a doubly robust, orthogonal pseudo-outcome, as developed next.

\section{Doubly Robust Learning of HSTE\MakeLowercase{s}}
\label{sec:orthogonal_estimator}
\textbf{DR learner.} We now introduce a doubly robust (DR) and Neyman-orthogonal learner for HSTE estimation under time-varying treatments. Using the fitted continuation regressions from
Sec.~\ref{sec:continuation_nuisances} and the estimated
treatment and censoring factors from
Sec.~\ref{sec:ipw_estimator}, the DR pseudo-outcome is:
\begin{equation}
\begin{aligned}
\widehat\Phi^{\bar a}_{\mathrm{DR}}
 &:={}
\widehat m_t^{\bar a}(\bar{H}_t)+
\sum_{\delta=0}^{\tau-1}
\mathbbm{1}\{E_{t+\delta}=1\}
\\[-1mm]
&\qquad
\widehat R_{t:t+\delta}^{A,\bar a}
\widehat R_{t:t+\delta}^{C}
\Bigl[
Y_{t+\delta+1}^{\mathrm{obs}}
\widehat m_{t+\delta+1}^{\bar a}(\bar{H}_{t+\delta+1})\\[-1mm]
&\qquad\quad
-\widehat m_{t+\delta}^{\bar a}(\bar{H}_{t+\delta})
\Bigr].
\end{aligned}
\label{eq:estimated-dr-pseudo-outcome}
\end{equation}
For two regimes, we define \(\widehat\Phi^{\bar a,\bar b}_{\mathrm{DR}}:=\widehat\Phi^{\bar a}_{\mathrm{DR}}-\widehat\Phi^{\bar b}_{\mathrm{DR}}\). Then, the final HSTE learner is:
\begin{equation}
\widehat\theta^{\bar a,\bar b}_{\mathrm{DR}}(\bar h_t)
:=
\widehat{\mathbb E}\!\left[
\widehat\Phi^{\bar a,\bar b}_{\mathrm{DR}}
\mid
\bar H_t=\bar h_t,E_t=1
\right].
\end{equation}

Our DR learner estimates our target estimand in Eq. \ref{eq:target-estimand} and comes with several favorable theoretical properties that we show in the next subsection.

\subsection{Theoretical properties}

Next, we establish two fundamental properties of our DR learner: \emph{double robustness}, which implies that the DR pseudo-outcome remains conditionally
unbiased if either the continuation nuisances are correctly specified or
both the treatment and censoring mechanisms are correctly specified, and \emph{Neyman-orthogonality}, i.e., first-order insensitivity to nuisance
estimation error. 

We write
\(\Phi_{\mathrm{DR}}^{\bar a,\bar b}(O;\eta)
:=\Phi_{\mathrm{DR}}^{\bar a}(O;\eta^{\bar a})
-\Phi_{\mathrm{DR}}^{\bar b}(O;\eta^{\bar b})\), with \(\eta_0\) the true
nuisance collection and \(\widetilde\eta\) a candidate collection (App.~\ref{app:theory-preliminaries}--\ref{app:statistical-convergence}).

\begin{restatable}[Double robustness and Neyman-orthogonality]{theorem}{mainDRtheorem}
\label{thm:main-dr-orthogonality}
Under the conditions of App.~\ref{app:theory-preliminaries}, at a fixed
landmark history \(\bar h_t\),
\begin{equation}
\mathbb E\!\left[
\Phi_{\mathrm{DR}}^{\bar a,\bar b}(O;\widetilde\eta)
\mid \bar H_t=\bar h_t,E_t=1
\right]
=\theta^{\bar a,\bar b}(\bar h_t)
\label{eq:main-dr-conditional-mean}
\end{equation}
whenever, for each \(\bar d\in\{\bar a,\bar b\}\), either its complete
continuation-regression sequence is correct or both the treatment and
censoring mechanisms are correct over their respective stage ranges.
For the conditional moment
\begin{equation}
\begin{aligned}
\Psi_{\bar a,\bar b}(\eta;\bar h_t)
&:=\mathbb E\!\left[
\Phi_{\mathrm{DR}}^{\bar a,\bar b}(O;\eta)
\mid \bar H_t=\bar h_t,E_t=1
\right]\\
&\quad-\theta^{\bar a,\bar b}(\bar h_t),
\end{aligned}
\label{eq:main-dr-conditional-moment}
\end{equation}
every regular path \(\epsilon\mapsto\eta_\epsilon\) through \(\eta_0\)
for which differentiation under the conditional expectation is valid satisfies
\begin{equation}
\left.\frac{\mathrm d}{\mathrm d\epsilon}
\Psi_{\bar a,\bar b}(\eta_\epsilon;\bar h_t)
\right|_{\epsilon=0}=0.
\label{eq:main-dr-orthogonality}
\end{equation}
\end{restatable}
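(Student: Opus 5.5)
Since \(\Phi^{\bar a,\bar b}_{\mathrm{DR}}\) is a difference of single-regime pseudo-outcomes and \(\theta^{\bar a,\bar b}=S^{\bar a}-S^{\bar b}\), it suffices to work with one regime \(\bar d\in\{\bar a,\bar b\}\). We show that \(\mathbb E[\Phi^{\bar d}_{\mathrm{DR}}(O;\widetilde\eta^{\bar d})\mid \bar H_t=\bar h_t,E_t=1]-S^{\bar d}(\bar h_t)\) has an exact \emph{second-order remainder} representation. We then read off both claims from it.

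\textbf{Step 1 (telescoping with true nuisances).} Under the true weights \(\pi,G\), define the weighted increments
\[
D_\delta:=\mathbbm 1\{E_{t+\delta}=1\}R^{A,\bar d}_{t:t+\delta}R^{C}_{t:t+\delta}\bigl[Y^{\mathrm{obs}}_{t+\delta+1}\widetilde m_{t+\delta+1}-\widetilde m_{t+\delta}\bigr].
\]
Iterated conditioning on \(\bar H_{t+\delta}\), \(A_{t+\delta}\) and \(E_{t+\delta}=1\) handles the treatment part. The one-step treatment factor integrates to one and selects \(A_{t+\delta}=d_{t+\delta}\) for \(\delta<L\); for \(\delta\ge L\) there is no selection and the natural-course average is taken. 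The next censoring factor \(r^C_{t+\delta+1}\) cancels the probability \(G_{t+\delta+1}\) of remaining under follow-up, using (\ref{ass:sequential-ignorability}) and (\ref{ass:sequential-positivity}). The key point is that \(Y^{\mathrm{obs}}_{t+\delta+1}\) equals one on censoring at \(t+\delta+1\), exactly when \(\bar H_{t+\delta+1}\) is observed but \(E_{t+\delta+1}=0\). Hence the \(\widetilde m_{t+\delta+1}\) term at stage \(\delta\) matches the \(\widetilde m_{t+\delta+1}\) term at stage \(\delta+1\) after reweighting.

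Carrying this out yields
\[
\mathbb E[\Phi^{\bar d}_{\mathrm{DR}}(\widetilde\eta)\mid\bar h_t]-S^{\bar d}(\bar h_t)=\sum_{\delta}\mathbb E\Bigl[\mathbbm 1\{E_{t+\delta}=1\}\bigl(\widetilde R^{A}\widetilde R^{C}-R^{A}R^{C}\bigr)_{t:t+\delta}\bigl(\widetilde Q_{t+\delta}-\widetilde m_{t+\delta}\bigr)\Bigm|\bar h_t\Bigr]+\cdots
\]
More cleanly, the remainder is a sum over \(\delta\) of products of the form
\[
\Bigl(\tfrac{\widetilde R^{A}_{t:t+\delta}\widetilde R^{C}_{t:t+\delta}}{R^{A}_{t:t+\delta}R^{C}_{t:t+\delta}}-1\Bigr)\times\bigl(\mathcal T_{t+\delta}\widetilde m_{t+\delta+1}-\widetilde m_{t+\delta}\bigr),
\]
where \(\mathcal T_{t+\delta}\) is the true one-step backward operator defining \(m_{t+\delta}\) from \(m_{t+\delta+1}\) in (\ref{eq:action_value})--(\ref{eq:continuation_survival}). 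The endpoint terms are controlled by \(\widetilde m_{t+\tau}\equiv1\) and by Proposition~\ref{prop:gcomp_identification}, which gives \(m^{\bar d}_t=S^{\bar d}\).

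I plan to derive this by writing \(\widetilde m_t+\sum_\delta(\dots)\) as \(\widetilde m_t\) plus a telescoping sum. After conditioning, each summand equals \(\mathbb E[\widetilde W_{\delta}(\mathcal T\widetilde m_{t+\delta+1}-\widetilde m_{t+\delta})]\) with weights \(\widetilde W_\delta\). Subtracting the same identity with true weights gives exactly \(S^{\bar d}\), since then the sum telescopes to \(m^{\bar d}_t\). This yields the product form.

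\textbf{Step 2 (double robustness).} If the continuation sequence is correct, every factor \(\mathcal T_{t+\delta}\widetilde m_{t+\delta+1}-\widetilde m_{t+\delta}\) vanishes, so the remainder is zero for arbitrary weights. If \(\widetilde\pi=\pi\) on stages \(0,\ldots,L-1\) and \(\widetilde G=G\) on stages \(1,\ldots,\tau-1\), the weight ratio equals one for every \(\delta\), so the remainder is again zero. Applying this regime by regime and subtracting gives (\ref{eq:main-dr-conditional-mean}).

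\textbf{Step 3 (orthogonality).} Along a path \(\eta_\epsilon\), the remainder is a finite sum of products in which each factor vanishes at \(\epsilon=0\). Differentiation under the conditional expectation is permitted by assumption, with positivity keeping the weights bounded. The product rule then gives a derivative of zero at \(\epsilon=0\), which is (\ref{eq:main-dr-orthogonality}).

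\textbf{Main obstacle.} The hard part is Step 1: making the telescoping exact under the observation convention. The delicate cases are the censored-at-\(t+\delta+1\) observations, where \(Y^{\mathrm{obs}}=1\) but \(E=0\), and the switch at \(\delta=L\) from selecting a prescribed action to natural-course averaging. I would first verify the one-step identity \(\mathbb E[r^{A}_{t+\delta}r^{C}_{t+\delta+1}Y^{\mathrm{obs}}_{t+\delta+1}f(\bar H_{t+\delta+1})\mid\bar H_{t+\delta},E_{t+\delta}=1]=Q\)-type expression, treating \(\delta\ge L\) with \(r^A\equiv1\). I would then assemble the sum by induction on \(\delta\).
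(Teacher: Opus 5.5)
Your proposal is correct, but it reaches an exact second-order remainder by a route dual to the paper's; the two remainders are related by summation by parts. The paper (Lemma~\ref{lem:conditional-second-order-remainder}) holds the \emph{true} continuations fixed. The residual \(Y^{\mathrm{obs}}_{t+\delta+1}m^{\bar d}_{t+\delta+1}-m^{\bar d}_{t+\delta}\) has conditional mean zero under \emph{arbitrary} candidate weights, so the bias reduces to terms in \(\varepsilon_{t+\delta}=\widetilde m^{\bar d}_{t+\delta}-m^{\bar d}_{t+\delta}\). Collecting the two occurrences of each \(\varepsilon_{t+\delta}\) then gives \(\sum_\delta\widetilde\Lambda^{\bar d}_{t+\delta}(1-\gamma_{t+\delta}\alpha^{\bar d}_{t+\delta})\varepsilon_{t+\delta}\): a one-step weight error times the same-stage continuation error, under candidate pre-stage weights. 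You instead hold the \emph{true} weights fixed. The change-of-measure identity \(\mathbb E[\mathbbm 1\{E_{t+\delta}=1\}R^{A,\bar d}_{t:t+\delta}R^{C}_{t:t+\delta}\,g(\bar H_{t+\delta})\mid\bar H_t=\bar h_t,E_t=1]=(\mathcal T_t\cdots\mathcal T_{t+\delta-1}g)(\bar h_t)\) shows that the score with true weights has mean \(\mathcal T_t\cdots\mathcal T_{t+\tau-1}1=m^{\bar d}_t\) for \emph{every} \(\widetilde m\). Subtracting leaves the weight difference times the Bellman residual \(\mathcal T_{t+\delta}\widetilde m_{t+\delta+1}-\widetilde m_{t+\delta}\), so the ``\(+\cdots\)'' in your first display is empty. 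That identity uses only the definitions of \(\pi\) and \(G\), the observation convention, and positivity; (\ref{ass:sequential-ignorability}) enters only through Proposition~\ref{prop:gcomp_identification}, not the censoring cancellation. In your form the integrating measure is fixed along the path, so in Step 3 all path dependence sits in the two vanishing factors. Its continuation factor is also the error of each backward fit relative to its own fitted successor, rather than the accumulated error \(\widetilde m_{t+\delta}-m_{t+\delta}\). The paper's form instead pairs stage-local weight errors \(1-\gamma_{t+\delta}\alpha^{\bar d}_{t+\delta}\) with same-stage continuation errors, which is exactly the structure of \(B_\eta(n)\). Your cumulative ratio \(\prod_{s\le\delta}\alpha^{\bar d}_{t+s}\gamma_{t+s}-1\) and the stage-mixing residual \(\mathcal T_{t+\delta}\varepsilon_{t+\delta+1}-\varepsilon_{t+\delta}\) would have to be Abel-summed back into the paper's form before the same lemma could serve Theorem~\ref{thm:main-dr-rate}. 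For the present theorem, either remainder suffices.
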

\noindent\textit{Proof in App.~\ref{app:dr-orthogonality}.}

The contrast is thus conditionally unbiased under these alternatives and
locally insensitive to simultaneous nuisance perturbations. We next study how these properties translate into the statistical
convergence of the estimated HSTE function. In particular, we show that
nuisance-estimation errors enter only through products of errors from
complementary nuisance components, yielding \emph{rate double robustness}
and, under sufficiently fast nuisance convergence, \emph{quasi-oracle
efficiency}.

Let \(\widehat\theta_{\mathrm{or}}^{\bar a,\bar b}\) be the output of the same
second-stage regression method as \(\widehat\theta_{\mathrm{DR}}^{\bar a,\bar b}\),
with the same function class \(\mathcal F\), loss, tuning rule, and sample splits,
but supplied with the oracle pseudo-outcome
\(\Phi_{\mathrm{DR}}^{\bar a,\bar b}(O;\eta_0)\). For a fixed \(\bar h_t\), define
\begin{equation}
\begin{aligned}
\mathcal R_{\bar h_t}(\widehat\theta)
&:=\mathbb E\!\left[
\left\{\widehat\theta(\bar h_t)
-\theta^{\bar a,\bar b}(\bar h_t)\right\}^2
\right],\\
e_{\mathrm{or}}^{\bar a,\bar b}(n)
&:=\mathcal R_{\bar h_t}
(\widehat\theta_{\mathrm{or}}^{\bar a,\bar b}).
\end{aligned}
\label{eq:main-oracle-risk}
\end{equation}
For each nuisance, let
\(e_{\pi,t+\delta}^{\bar d}(n)\), \(e_{G,t+\delta}^{\bar d}(n)\), and
\(e_{m,t+\delta}^{\bar d}(n)\) denote the squared conditional \(L_4\) error
scales defined in App.~\ref{app:statistical-convergence}, and set
\begin{equation}
\begin{aligned}
B_\eta(n)
&:=
\sum_{\bar d\in\{\bar a,\bar b\}}
\Biggl\{
\sum_{\delta=0}^{L-1}
e_{\pi,t+\delta}^{\bar d}(n)e_{m,t+\delta}^{\bar d}(n)
\\[-1mm]
&\qquad{}
+\sum_{\delta=1}^{\tau-1}
e_{G,t+\delta}^{\bar d}(n)e_{m,t+\delta}^{\bar d}(n)
\Biggr\}.
\end{aligned}
\label{eq:main-nuisance-remainder}
\end{equation}
\begin{restatable}[Rate double robustness and quasi-oracle efficiency]{theorem}{mainRateTheorem}
\label{thm:main-dr-rate}
Under the conditions of App.~\ref{app:statistical-convergence},
\begin{equation}
\mathcal R_{\bar h_t}\!\left(
\widehat\theta_{\mathrm{DR}}^{\bar a,\bar b}
\right)
\lesssim
e_{\mathrm{or}}^{\bar a,\bar b}(n)+B_\eta(n).
\label{eq:main-quasi-oracle-rate}
\end{equation}
\end{restatable}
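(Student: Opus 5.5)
We reduce the risk bound to a stability statement for the second-stage regression plus a bias calculation for the pseudo-outcome. Write $\widehat\Phi:=\Phi_{\mathrm{DR}}^{\bar a,\bar b}(O;\widehat\eta)$ and $\Phi_0:=\Phi_{\mathrm{DR}}^{\bar a,\bar b}(O;\eta_0)$. Both are computed on the held-out fold with $\widehat\eta$ fitted on independent folds. Conditionally on the nuisance fold, define the pseudo-outcome bias
\begin{equation*}
\widehat b(\bar h_t):=\mathbb E\!\left[\widehat\Phi-\Phi_0\mid \bar H_t=\bar h_t,E_t=1,\widehat\eta\right].
\end{equation*}
The conditions of App.~\ref{app:statistical-convergence} presumably include a pointwise stability condition on the second-stage regressor, as in Kennedy-type DR-learner analyses. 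It states that regressing a pseudo-outcome whose conditional mean is shifted by $\widehat b$ changes the pointwise risk by at most $\mathbb E[\widehat b(\bar h_t)^2]$ up to constants and an $o(\cdot)$ term. Invoking this gives
\begin{equation*}
\mathcal R_{\bar h_t}(\widehat\theta_{\mathrm{DR}}^{\bar a,\bar b})\lesssim e_{\mathrm{or}}^{\bar a,\bar b}(n)+\mathbb E[\widehat b(\bar h_t)^2].
\end{equation*}
The remaining work is to show $\mathbb E[\widehat b(\bar h_t)^2]\lesssim B_\eta(n)$.

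To bound $\widehat b$, we handle each regime $\bar d\in\{\bar a,\bar b\}$ separately. The goal is an exact telescoping (von Mises) expansion of $\mathbb E[\Phi_{\mathrm{DR}}^{\bar d}(O;\widetilde\eta)-\Phi_{\mathrm{DR}}^{\bar d}(O;\eta_0)\mid\bar H_t,E_t=1]$. This is the same algebra as in the proof of Theorem~\ref{thm:main-dr-orthogonality}. We iterate expectations stage by stage, from $\delta=\tau-1$ back to $0$, and at each stage use the following facts:
\begin{itemize}
\item the one-step treatment factor has conditional mean one under $\pi$;
\item the one-step censoring factor has conditional mean one under $G$;
\item $Y^{\mathrm{obs}}_{t+\delta+1}$ coincides with $Y_{t+\delta+1}$ on the uncensored set;
\item $Q$ and $m$ satisfy the recursion (\ref{eq:action_value})--(\ref{eq:continuation_survival}).
\end{itemize}
This should yield a sum of second-order terms. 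Each term is an expectation, over histories reached under the true mechanisms, of a product: a ratio discrepancy times a continuation error. The ratio discrepancy is either $(\pi_{t+\delta}-\widehat\pi_{t+\delta})/\widehat\pi_{t+\delta}$ for $\delta<L$, or $(G_{t+\delta}-\widehat G_{t+\delta})/\widehat G_{t+\delta}$ for $\delta\ge1$. The continuation error is $\widehat m^{\bar d}_{t+\delta}-m^{\bar d}_{t+\delta}$. Each term also carries cumulative estimated-over-true weights $\widehat R/R$ from earlier stages. The treatment sum runs only up to $L-1$ and the censoring sum from $1$ to $\tau-1$, matching $B_\eta(n)$. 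Treatment and censoring errors at the same stage can both appear. We organize the telescoping so that every remainder term pairs exactly one mechanism error with one continuation error, and any extra mechanism factors are absorbed into bounded weights.

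Next we bound each product term by conditional Cauchy--Schwarz. Strong positivity on the estimated and true $\pi$ and $G$ bounds the accumulated weights $\widehat R^{A}\widehat R^{C}$ and their ratios to the truth by a constant depending on $L$, $\tau$ and the positivity bounds. We then apply Cauchy--Schwarz conditional on $\bar H_t=\bar h_t$. Squaring $\widehat b$ and taking expectations over the nuisance fold raises the moments to fourth order. This is why the conditional $L_4$ error scales are needed: $\mathbb E[(\mathbb E[fg\mid\bar h_t])^2]\le \mathbb E[\|f\|_{4}^{4}]^{1/2}\mathbb E[\|g\|_{4}^{4}]^{1/2}$. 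Under the squared-scale definitions of App.~\ref{app:statistical-convergence}, this product is presumably $e_\pi e_m$ or $e_G e_m$. Finally, $(\sum_j x_j)^2\le J\sum_j x_j^2$ with $J$ a fixed number of terms of order $2\tau$, together with the difference across the two regimes, gives $\mathbb E[\widehat b(\bar h_t)^2]\lesssim B_\eta(n)$.

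The main obstacle is the exact remainder expansion in the second step. Treatment and censoring factors interleave differently inside and outside the intervention window. The boundary conventions also need care: the terminal $m_{t+\tau}\equiv1$, the switch from $m^{\bar a}$ to the common $m$ at $t+L$, and $Y^{\mathrm{obs}}$ counting censoring at $t+\delta+1$ as still observed. Together these make it easy to produce spurious first-order terms or cross products of two mechanism errors. To avoid this, we first write the oracle identity $\mathbb E[\Phi^{\bar d}_{\mathrm{DR}}(O;\eta_0)\mid\bar H_{t+\delta},E_{t+\delta}=1]$ recursively, and only then substitute $\widehat\eta$ one stage at a time. This add-and-subtract at each stage isolates exactly one mechanism-error times continuation-error product per stage.
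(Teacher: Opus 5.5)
Your proposal is correct and follows essentially the same route as the paper. You invoke the Kennedy-style Stability Condition~\ref{ass:generated-outcome-stability} to reduce the risk to the squared conditional pseudo-outcome bias, and write each regime's bias through the exact telescoping remainder of Lemma~\ref{lem:conditional-second-order-remainder}, where $1-\gamma_{t+\delta}\alpha^{\bar d}_{t+\delta}$ splits into a treatment error plus a boundedly weighted censoring error; you then use bounded Radon--Nikodym and weight factors, Jensen, and Cauchy--Schwarz on the joint training-sample/evaluation law to get the $e_\pi e_m$ and $e_G e_m$ products, and finish with $(a-b)^2\le 2a^2+2b^2$ across the two regimes (the paper's stability bound has no $o(\cdot)$ term and averages explicitly over folds with weights $\rho_v$, which changes nothing).
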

\noindent\textit{Proof in App.~\ref{app:statistical-convergence}.}

\begin{remark}
If \(B_\eta(n)=O\!\left\{e_{\mathrm{or}}^{\bar a,\bar b}(n)\right\}\), then
\begin{equation}
\mathcal R_{\bar h_t}\!\left(
\widehat\theta_{\mathrm{DR}}^{\bar a,\bar b}
\right)
=O\!\left\{e_{\mathrm{or}}^{\bar a,\bar b}(n)\right\}.
\label{eq:main-quasi-oracle-consequence}
\end{equation}
Thus nuisance estimation does not worsen the convergence rate relative to the learner that uses the oracle nuisances. The bound also exhibits rate
double robustness: continuation-regression errors enter only in product with
treatment or censoring errors, so faster convergence of one nuisance block can
compensate for slower convergence of its complementary block.
The exact remainder is given in App.~\ref{app:exact-remainder}.
\end{remark}

\section{Evaluation}
In this section, we perform several numerical experiments with the objective of validating the effectiveness of our approach. Following standard practice in causal inference \citep{Bica2020Estimating, melnychuk2022causal}, we use synthetic and semi-synthetic datasets because they provide access to ground-truth counterfactuals. Additionally, we use a real-world dataset to demonstrate practical applicability.

\textbf{Architecture and training. } 
We use a lightweight Transformer, similar to \citet{hess2026overlapweighted}, for all nuisance and second-stage regressions.
At each time step, the longitudinal history together with the baseline covariates is
projected into a latent representation, augmented with fixed sinusoidal positional
encodings, and processed by a single causally masked Transformer encoder layer followed
by a two-layer ReLU prediction head. Continuation and second-stage models use latent
width 30, three attention heads, feed-forward width 20, and dropout $0.1$, whereas the
treatment and censoring models use width 40, four heads, and no dropout. All models are
trained with Adam ($10^{-3}$ learning rate, batch size 64) for at most 30 epochs, with
early stopping on a 15\% patient-level validation split. The hyperparameters were inherited from \citet{hess2026overlapweighted} and no additional tuning was performed. In App. \ref{app:computational-resources} we give additional information on computational resources and runtimes.

\textbf{Baselines.}
We compare our DR learner against three model-agnostic baselines derived from the complementary identification strategies in Sec.~4: the plug-in (PI) learner, which directly contrasts the estimated regime-specific survival continuations; regression adjustment (RA), which regresses the plug-in contrast on the landmark history; and IPW, which regresses the contrast of treatment- and censoring-weighted survival pseudo-outcomes. We additionally compare against TV-SurvCaus \citep{abraich2025tv}, a recent neural method for individualized survival effects under time-varying treatments, which we adapt to our finite-intervention setting and implement using the same Transformer backbone for a controlled comparison. To our knowledge, this is the only existing method for HSTE estimation under time-varying treatments.

\begin{table}[t]
\caption{\textbf{Fully synthetic dataset.}}
\centering
\scriptsize
\setlength{\tabcolsep}{2.5pt}

\resizebox{\columnwidth}{!}{%
\begin{tabular}{lcccc}
\toprule
Method
    & $L{=}2,\tau{=}2$
    & $L{=}2,\tau{=}3$
    & $L{=}2,\tau{=}4$
    & $L{=}3,\tau{=}8$ \\
\midrule
PI
    & 0.058\,{\scriptsize$\pm$0.002}
    & 0.054\,{\scriptsize$\pm$0.005}
    & 0.047\,{\scriptsize$\pm$0.011}
    & 0.047\,{\scriptsize$\pm$0.020} \\
RA
    & 0.063\,{\scriptsize$\pm$0.009}
    & 0.055\,{\scriptsize$\pm$0.008}
    & 0.048\,{\scriptsize$\pm$0.011}
    & 0.050\,{\scriptsize$\pm$0.021} \\
IPW
    & 0.081\,{\scriptsize$\pm$0.013}
    & 0.086\,{\scriptsize$\pm$0.029}
    & 0.084\,{\scriptsize$\pm$0.023}
    & 0.237\,{\scriptsize$\pm$0.073} \\
TV-SurvCaus
    & 0.058\,{\scriptsize$\pm$0.003}
    & 0.054\,{\scriptsize$\pm$0.003}
    & 0.049\,{\scriptsize$\pm$0.003}
    & 0.051\,{\scriptsize$\pm$0.005} \\
DR
    & \textbf{0.030}\,{\scriptsize\bfseries\boldmath$\pm$0.008}
    & \textbf{0.037}\,{\scriptsize\bfseries\boldmath$\pm$0.013}
    & \textbf{0.031}\,{\scriptsize\bfseries\boldmath$\pm$0.011}
    & \textbf{0.043}\,{\scriptsize\bfseries\boldmath$\pm$0.008} \\
\midrule
DR improvement (\%)
    & \textbf{48.3}
    & \textbf{31.5}
    & \textbf{34.0}
    & \textbf{8.5} \\
\bottomrule
\end{tabular}%
}

\par\vspace{1pt}
\begin{minipage}{\columnwidth}
\raggedright\tiny
Reported: RMSE of the estimated HSTE; lower is better. \par Values are mean $\pm$ 95\% $t$-interval half-width over 5 repetitions.\par
DR improvement is relative to the best competing learner.
\end{minipage}
\vspace{1pt}
\label{tab:synthetic-main}
\end{table}

\begin{table}[t]
\caption{\textbf{Semi-synthetic dataset.}}
\centering
\scriptsize
\setlength{\tabcolsep}{2.5pt}

\resizebox{\columnwidth}{!}{%
\begin{tabular}{lcccc}
\toprule
Method
    & $L{=}2,\tau{=}2$
    & $L{=}2,\tau{=}3$
    & $L{=}2,\tau{=}4$
    & $L{=}3,\tau{=}8$ \\
\midrule
PI
    & 0.073\,{\scriptsize$\pm$0.007}
    & 0.069\,{\scriptsize$\pm$0.011}
    & 0.063\,{\scriptsize$\pm$0.007}
    & 0.071\,{\scriptsize$\pm$0.014} \\
RA
    & 0.074\,{\scriptsize$\pm$0.012}
    & 0.068\,{\scriptsize$\pm$0.011}
    & 0.065\,{\scriptsize$\pm$0.007}
    & 0.071\,{\scriptsize$\pm$0.014} \\
IPW
    & 0.099\,{\scriptsize$\pm$0.007}
    & 0.116\,{\scriptsize$\pm$0.034}
    & 0.105\,{\scriptsize$\pm$0.048}
    & 0.327\,{\scriptsize$\pm$0.044} \\
TV-SurvCaus
    & 0.073\,{\scriptsize$\pm$0.001}
    & 0.071\,{\scriptsize$\pm$0.003}
    & 0.066\,{\scriptsize$\pm$0.002}
    & 0.074\,{\scriptsize$\pm$0.004} \\
DR
    & \textbf{0.046}\,{\scriptsize\bfseries\boldmath$\pm$0.008}
    & \textbf{0.048}\,{\scriptsize\bfseries\boldmath$\pm$0.008}
    & \textbf{0.047}\,{\scriptsize\bfseries\boldmath$\pm$0.006}
    & \textbf{0.055}\,{\scriptsize\bfseries\boldmath$\pm$0.011} \\
\midrule
DR improvement (\%)
    & \textbf{37.0}
    & \textbf{29.4}
    & \textbf{25.4}
    & \textbf{22.5} \\
\bottomrule
\end{tabular}%
}

\par\vspace{1pt}
\begin{minipage}{\columnwidth}
\raggedright\tiny
Reported: RMSE of the estimated HSTE; lower is better. \par Values are mean $\pm$ 95\% $t$-interval half-width over 5 repetitions.\par
DR improvement is relative to the best competing learner.
\end{minipage}

\vspace{1pt}
\label{tab:semisynthetic-main}
\end{table}

\textbf{Fully synthetic experiment.} We first consider a fully synthetic longitudinal process with treatment--confounder feedback, in which patient severity drives treatment assignment, treatment affects subsequent severity, and censoring depends on post-landmark covariates (App.~\ref{sec:dgp-synthetic}). Each repetition contains independent nuisance, second-stage, and test cohorts of 1{,}800, 1{,}800, and 1{,}200 patients, respectively, observed over 32 decision times. Patient-level ground-truth HSTEs are obtained through paired Monte Carlo counterfactual rollouts, allowing direct evaluation of individual effect estimates (App.~\ref{app:synthetic}).

\underline{Results.} The DR learner achieves the lowest RMSE in all four settings, reducing error by 9--48\% relative to the best competing method (Table~\ref{tab:synthetic-main}). Across configurations, DR consistently outperforms PI, RA, IPW, and TV-SurvCaus, showing that the doubly robust construction translates into a clear empirical advantage. In particular, while PI, RA, and TV-SurvCaus attain similar but systematically higher errors, IPW is substantially less accurate. Since DR and RA rely on the same continuation regressions and differ through the weighted orthogonal correction, their performance gap highlights the practical benefit of augmenting regression adjustment with the doubly robust orthogonal correction.
Additional diagnostics and checks using true treatment and censoring probabilities are provided in App.~\ref{app:synthetic-experiment}.

\textbf{Semi-synthetic experiment.}
We next consider a semi-synthetic benchmark based on the first 24 hours of ICU history from 6{,}000 adult MIMIC-IV patients \citep{johnson2023mimic}, using heart rate, mean arterial pressure, respiratory rate, age, and sex as the landmark history. Treatment, subsequent covariates, events, and censoring are then simulated, preserving realistic patient histories while retaining known counterfactual effects for evaluation (App.~\ref{app:semisynthetic}).

\underline{Results.}
The DR learner achieves the lowest RMSE in all four settings, reducing error by 23--37\% relative to the best competing learner (Table~\ref{tab:semisynthetic-main}). PI, RA, and TV-SurvCaus perform similarly but have consistently higher errors, while IPW is less accurate throughout.

Additional experiments further support this interpretation: targeted nuisance misspecification produces the qualitative double-robustness behavior predicted by theory (App.~\ref{app:semisynthetic-misspecification}), while stress tests show that DR remains competitive under substantially heavier censoring, sharper treatment assignment, and higher event rates, achieving the lowest RMSE in all seven short-horizon conditions and five of seven long-horizon conditions (App.~\ref{app:semisynthetic-stress}).

\textbf{Real-world experiment.}
Finally, we consider a real-world cohort of MIMIC-IV patients receiving vasopressors in the ICU. Since counterfactual outcomes are unavailable in real data, we evaluate the survival predicted under continuing vasopressors for six more hours against the observed outcomes of patients who followed that sequence, weighting each patient by the inverse of their estimated probability of continuing treatment \citep{keogh2024prediction}. We report the weighted Brier score of these survival predictions, with additional cohort and evaluation details in App.~\ref{app:realworld}.

\begin{table}[t]
\caption{\textbf{Real-world dataset.}}
\centering
\scriptsize
\setlength{\tabcolsep}{4pt}

\begin{tabular}{lcc}
\toprule
Method
    &  ($L=2,\tau=8$)
    &  ($L=2,\tau=16$) \\
\midrule
PI
    & \textbf{0.0552}\,{\tiny\bfseries\boldmath$\pm$0.0010}
    & 0.0841\,{\tiny$\pm$0.0014} \\
RA
    & 0.0598\,{\tiny$\pm$0.0025}
    & 0.0882\,{\tiny$\pm$0.0029} \\
IPW
    & 0.0589\,{\tiny$\pm$0.0013}
    & 0.0836\,{\tiny$\pm$0.0031} \\
TV-SurvCaus
    & 0.0634\,{\tiny$\pm$0.0006}
    & 0.0945\,{\tiny$\pm$0.0027} \\
DR
    & \textbf{0.0552}\,{\tiny\bfseries\boldmath$\pm$0.0020}
    & \textbf{0.0788}\,{\tiny\bfseries\boldmath$\pm$0.0015} \\
\midrule
DR improvement (\%)
    & 0.0
    & 5.7 \\
\bottomrule
\end{tabular}

\par\vspace{1pt}
\begin{minipage}{\columnwidth}
\raggedright\tiny
Reported: Weighted Brier score of predicted survival; lower is better. \par Values are mean $\pm$ 95\% $t$-interval half-width over five random seeds.\par
DR improvement is relative to the best competing learner.
\end{minipage}

\vspace{1pt}
\label{tab:realworld-main}
\end{table}

\underline{Results.}
DR ties PI for the lowest weighted Brier score at $\tau=8$ (24 hours) and achieves the lowest score at $\tau=16$ (48 hours), reducing error by 5.7\% relative to the best competing learner (Table~\ref{tab:realworld-main}).

\textbf{Conclusion.} We introduced a novel doubly robust learner for heterogeneous survival treatment effects under time-varying treatments. We established double robustness, Neyman-orthogonality and quasi-oracle efficiency. Across synthetic, semi-synthetic, and real-world experiments, the method showed strong empirical performance, supporting its use for survival-effect estimation under time-varying treatments.

\section*{Acknowledgments}
We sincerely thank Eduard Serrahima de Cambra for helpful discussions related to this work. We acknowledge Novartis for sponsoring Tomàs Garriga’s industrial PhD. We also thank the Government of Catalonia’s Industrial PhDs Plan for funding part of this research. We acknowledge Horizon Europe Programme under the AI4DEBUNK Project (\url{https://www.ai4debunk.eu}), grant agreement num. 101135757. Tingting Zhu was supported by the Royal Academy of Engineering under the Research Fellowship scheme.

\bibliographystyle{apalike}
\bibliography{aistats2027}

\clearpage
\appendix
\thispagestyle{empty}
\onecolumn
\raggedbottom
\aistatstitle{
Supplementary Materials}

\section{Extended related work}
\label{app:extended-related-work}
\textbf{Non-survival causal estimation under time-varying treatments.} Classical approaches for causal estimation under time-varying treatments include marginal structural models fitted using inverse probability of treatment weighting (IPTW) \citep{robins2000marginal} and the parametric g-formula, also known as g-computation \citep{taubman2009intervening}. More recent work has adapted these ideas to high-dimensional longitudinal settings using flexible neural architectures. \citet{RMSN} combine recurrent networks with propensity-based weighting, whereas \citet{gnet} use recurrent models to perform simulation-based g-computation. G-Transformer \citep{pmlr-v252-xiong24a} extends this line of work with Transformer-based sequence modeling, while IGC-Net \citep{hess2026igcnet} implements iterative regression-based g-computation using a Transformer architecture. Other approaches, including Counterfactual Recurrent Networks \citep{Bica2020Estimating} and Causal Transformer \citep{melnychuk2022causal}, rely on balanced representations of longitudinal patient histories to estimate counterfactual outcomes. A related line models the entire causal data-generating process with different generative models \citep{NIPS2017_94b5bde6,NEURIPS2020_0987b8b3,parafita2022estimand,sanchez2022diffusion} to produce counterfactual estimations. Relatedly, \citet{garriga2026cepae} develop an autoencoder-based approach that uses structural causal models for counterfactual inference in time series under interventions on observed events. Moving beyond estimator-specific neural architectures, \citet{frauen2025modelagnostic} propose a general family of model-agnostic meta-learners based on history adjustment, regression adjustment, propensity weighting, doubly robust correction, and inverse-variance weighting. \citet{hess2026overlapweighted} build on this framework with an orthogonal overlap-weighted estimator designed to mitigate the loss of treatment overlap that arises as treatment sequences become longer. 


\textbf{Marginal survival treatment effect estimation.}
A complementary line of work targets population-average rather than history-conditional survival effects. In static-treatment settings, several approaches estimate treatment-specific marginal survival curves while accounting for confounding and censoring \citep{hubbard2000nonparametric,zhang2012contrasting,bai2013doubly}. More recently, \citet{westling2024inference} propose a cross-fitted doubly robust estimator of treatment-specific marginal survival curves that accommodates flexible machine-learning nuisance estimators and provides pointwise and uniform inference. Related work on dynamic survival prediction uses longitudinal measurements to update individual risk predictions over time, for example through recurrent neural networks \citep{lee2020dynamic} or conditional variational models \citep{mesinovic2026dysurv}; however, these approaches are predictive rather than designed to estimate causal effects of alternative treatment strategies. For longitudinal treatments, marginal structural models provide a classical framework for estimating population-average causal effects in the presence of time-varying confounding \citep{robins2000marginal,hernan2000zidovudine}. Doubly robust extensions have also been developed in this setting, including estimators for longitudinal marginal structural models of survival processes \citep{yu2006double} and targeted estimators for intervention-specific hazard functions \citep{zheng2016doubly,petersen2014targeted}. More recently, \citet{keogh2023causal} study marginal structural models and sequential trial emulation for estimating marginal risk differences between sustained treatment strategies. Thus, estimation of marginal survival effects under both static and longitudinal treatments is relatively well established. Our focus instead is to learn survival conditional on an individual's observed history at a landmark, under finite treatment sequences followed by natural-course treatment, rather than averaging these contrasts over the target population.

\section{Backward fitting}
\label{app:backward-fitting}
Regressing terminal survival only on the landmark history and prescribed treatment sequence would ignore later, treatment-affected covariates that jointly govern survival, censoring, and subsequent treatment. These covariates cannot be included directly in a landmark-time learner: they are unavailable when the prediction is made, and conditioning on their realized values would not average over the trajectories induced by the intervention. Iterative g-computation performs this marginalization one time point at a time. The recursion runs through \(t+\tau-1\), rather than stopping at the final prescribed decision \(t+L-1\), because it must continue averaging over natural-course treatment and evolving histories until survival is evaluated at \(t+\tau\). At each step, restricting the regression to the contemporaneous risk set accounts for censoring under sequential independent censoring (\ref{ass:sequential-ignorability}) without requiring a censoring model.

\textbf{Backward fitting.} Start from \(\widehat m_{t+\tau}\equiv1\) and move backward through \(\delta=\tau-1,\ldots,0\), using the continuation estimate at \(t+\delta+1\) to form a regression target at \(t+\delta\). For each regime \(\bar a\), every patient in the risk set \(\mathcal I_{t+\delta}:=\{i:E_{i,t+\delta}=1\}\) receives the local pseudo-outcome
\begin{equation}
\widehat Z_{i,t+\delta}^{\bar a}
:=
\begin{cases}
0, & \widetilde T_i=t+\delta+1,\ \Delta_i=1, \\[1mm]
\widehat m_{t+\delta+1}^{\bar a}(\bar{H}_{i,t+\delta+1}), & \widetilde T_i>t+\delta+1
\ \text{or}
\bigl(\widetilde T_i=t+\delta+1,\ \Delta_i=0\bigr),
\end{cases}
\end{equation}
The first case is an event during \([t+\delta,t+\delta+1)\); the second includes patients censored exactly at \(t+\delta+1\), whose successor history is still recorded. For \(\delta=\tau-1,\ldots,L\), the target \(\widehat Z_{i,t+\delta}^{\bar a}=:\widehat Z_{i,t+\delta}\) has no regime dependence. Regress it once on \((\bar H_{i,t+\delta},A_{i,t+\delta})\) to obtain \(\widehat Q_{t+\delta}\). Since observed \(A_{i,t+\delta}\) follows \(\pi_{t+\delta}\) given \(\bar H_{i,t+\delta}\), regressing the generated outcome \(\widetilde M_{i,t+\delta}:=\widehat Q_{t+\delta}(\bar H_{i,t+\delta},A_{i,t+\delta})\) on \(\bar H_{i,t+\delta}\) yields the shared \(\widehat m_{t+\delta}\) in \eqref{eq:natural-continuation} without fitting a natural-course propensity. For \(\delta=L-1,\ldots,0\), use the common boundary \(\widehat m_{t+L}^{\bar a}:=\widehat m_{t+L}\), fit \(\widehat Q_{t+\delta}^{\bar a}\) separately for each regime, and set \(\widehat m_{t+\delta}^{\bar a}(\bar h_{t+\delta}):=\widehat Q_{t+\delta}^{\bar a}(\bar h_{t+\delta},a_{t+\delta})\). When \(\tau=L\), the shared tail is empty and the boundary equals one.

The common conditional tail does not imply that the post-intervention histories have the same distribution under different regimes.

\section{Proof of Proposition~\NoCaseChange{\ref{prop:gcomp_identification}}}
\label{app:gcomp_identification}

\begin{proof}
Let
\begin{equation}
Y_{t+\tau}^{\bar a}:=\mathbbm{1}\!\left\{T^{(\bar a_{t:t+L-1},\mathrm{nat})}>t+\tau\right\}.
\label{eq:gcomp-proof-potential-survival}
\end{equation}
Call a history at time \(t+\delta\) regime-compatible if its treatments at
times \(t+s\), for \(0\leq s\leq\min\{\delta-1,L-1\}\), agree with \(\bar a\);
after \(t+L\), its treatment components are left at their natural values.
On such histories, define the causal continuation quantity
\begin{equation}
V_{t+\delta}^{\bar a}(\bar{h}_{t+\delta})
:=
\mathbb{E}\!\left[Y_{t+\tau}^{\bar a}\mid
\bar{H}_{t+\delta}=\bar{h}_{t+\delta},E_{t+\delta}=1\right].
\label{eq:gcomp-proof-causal-continuation}
\end{equation}
Because prior treatment is contained in \(\bar{h}_{t+\delta}\), this is the probability
of survival through \(t+\tau\), conditional on the current at-risk history,
when any remaining treatments at offsets \(s=\delta,\ldots,L-1\) are set to \(a_{t+s}\)
and treatment from time \(t+L\) onward follows its natural course. In particular,
\(V_t^{\bar a}(\bar{h}_t)=S^{\bar a}(\bar{h}_t)\). For \(\delta\geq L\),
conditional on a given compatible history, only natural-course treatment
remains; this does not equate the distributions of histories across regimes.
We prove from the observed-data recursion that \(m_{t+\delta}^{\bar a}=V_{t+\delta}^{\bar a}\)
by backward induction, using the common-continuation convention after \(L\).

First, on a regime-compatible history with \(E_{t+\tau}=1\), consistency gives
\(Y_{t+\tau}^{\bar a}=\mathbbm{1}\{T>t+\tau\}=1\), and hence
\(V_{t+\tau}^{\bar a}=1=m_{t+\tau}\). At the first nontrivial step,
\begin{equation}
Q_{t+\tau-1}^{\bar a}(\bar{h}_{t+\tau-1},a)
=
\mathbb{E}\!\left[
Y_{t+\tau}\;\middle|\;
\substack{
\bar{H}_{t+\tau-1}=\bar{h}_{t+\tau-1},\\
A_{t+\tau-1}=a,\\
E_{t+\tau-1}=1
}
\right].
\label{eq:gcomp-proof-terminal-action}
\end{equation}
Since \(\tau\geq L\), the last pre-horizon decision time \(t+\tau-1\) is no
earlier than the final prescribed time \(t+L-1\), and consistency gives
\begin{equation}
Q_{t+\tau-1}^{\bar a}(\bar{h}_{t+\tau-1},a)
=\mathbb{E}\!\left[
Y_{t+\tau}^{\bar a}\;\middle|\;
\substack{
\bar{H}_{t+\tau-1}=\bar{h}_{t+\tau-1},\\
A_{t+\tau-1}=a,\\
E_{t+\tau-1}=1
}
\right]
\label{eq:gcomp-proof-consistent-action}
\end{equation}
for an action compatible with the regime. If \(\tau=L\), sequential treatment
exchangeability (\ref{ass:treatment-exchangeability}) therefore gives
\begin{equation}
m_{t+\tau-1}^{\bar a}(\bar{h}_{t+\tau-1})
=Q_{t+\tau-1}^{\bar a}(\bar{h}_{t+\tau-1},a_{t+L-1})
=V_{t+\tau-1}^{\bar a}(\bar{h}_{t+\tau-1}).
\label{eq:gcomp-proof-terminal-continuation}
\end{equation}
If \(\tau>L\), the current treatment is natural, and the tower property gives
the same equality after averaging
\(Q_{t+\tau-1}(\bar{h}_{t+\tau-1},A_{t+\tau-1})\) conditional on
\(\bar{H}_{t+\tau-1}=\bar{h}_{t+\tau-1},E_{t+\tau-1}=1\). Censoring exactly at \(t+\tau\)
does not obscure \(Y_{t+\tau}\): under the observation convention it records survival
through \(t+\tau\).

Now fix \(\delta\in\{0,\ldots,\tau-2\}\) and suppose
\(m_{t+\delta+1}^{\bar a}=V_{t+\delta+1}^{\bar a}\) on the relevant histories. Let
\(D_{t+\delta+1}:=\mathbbm{1}\{C>t+\delta+1\}\). Conditional on
\(E_{t+\delta}=1\) and \(Y_{t+\delta+1}=1\), the successor history
\(\bar{H}_{t+\delta+1}\) is observed whether \(D_{t+\delta+1}=1\) or the
patient is censored exactly at \(t+\delta+1\). For \(D_{t+\delta+1}=1\), the
induction hypothesis gives the conditional mean of \(Y_{t+\tau}^{\bar a}\) as
\(m_{t+\delta+1}^{\bar a}(\bar{H}_{t+\delta+1})\). Sequential independent censoring
(\ref{ass:sequential-ignorability}) implies
\begin{equation}
\begin{aligned}
&\mathbb{E}\!\left[Y_{t+\tau}^{\bar a}\mid
\bar{H}_{t+\delta+1},T>t+\delta+1,C>t+\delta,D_{t+\delta+1}=1\right] \\
&\qquad=
\mathbb{E}\!\left[Y_{t+\tau}^{\bar a}\mid
\bar{H}_{t+\delta+1},T>t+\delta+1,C>t+\delta\right],
\end{aligned}
\label{eq:gcomp-proof-censoring-transport}
\end{equation}
and censoring positivity makes the left-hand conditional mean available on all
relevant successor histories. The same continuation value therefore applies
when \(C=t+\delta+1\). If an event occurs in \([t+\delta,t+\delta+1)\),
then \(Y_{t+\delta+1}=0\); consistency and the temporal ordering give
\(Y_{t+\tau}^{\bar a}=0\) on a regime-compatible path through \(t+\delta\),
and the unavailable successor history is
immaterial. Consequently, for every current action compatible with the regime,
the induction hypothesis and iterated expectations yield
\begin{equation}
Q_{t+\delta}^{\bar a}(\bar{h}_{t+\delta},a)
=
\mathbb{E}\!\left[Y_{t+\tau}^{\bar a}\mid
\bar{H}_{t+\delta}=\bar{h}_{t+\delta},A_{t+\delta}=a,E_{t+\delta}=1\right].
\label{eq:gcomp-proof-action-value}
\end{equation}

If \(\delta<L\), treatment is prescribed. Treatment positivity makes the
conditional regression at \(a_{t+\delta}\) well defined, and sequential treatment
exchangeability (\ref{ass:treatment-exchangeability}) and consistency applied to
\eqref{eq:gcomp-proof-action-value} give
\begin{equation}
m_{t+\delta}^{\bar a}(\bar{h}_{t+\delta})
=Q_{t+\delta}^{\bar a}(\bar{h}_{t+\delta},a_{t+\delta})
=\mathbb{E}\!\left[Y_{t+\tau}^{\bar a}\mid
\bar{H}_{t+\delta}=\bar{h}_{t+\delta},E_{t+\delta}=1\right]
=V_{t+\delta}^{\bar a}(\bar{h}_{t+\delta}).
\label{eq:gcomp-proof-prescribed-continuation}
\end{equation}
If \(\delta\geq L\), no treatment is imposed at time \(t+\delta\). Rather,
\(A_{t+\delta}\mid(\bar{H}_{t+\delta}=\bar{h}_{t+\delta},E_{t+\delta}=1)\)
has its natural conditional law \(\pi_{t+\delta}(\cdot\mid \bar{h}_{t+\delta})\).
Hence, without fixing a post-intervention treatment
path,
\begin{equation}
\begin{aligned}
m_{t+\delta}(\bar{h}_{t+\delta})
&=\mathbb{E}\!\left[Q_{t+\delta}(\bar{h}_{t+\delta},A_{t+\delta})
\mid \bar{H}_{t+\delta}=\bar{h}_{t+\delta},E_{t+\delta}=1\right]\\
&=\mathbb{E}\!\left[
\mathbb{E}\!\left[Y_{t+\tau}^{\bar a}\mid
\bar{H}_{t+\delta},A_{t+\delta},E_{t+\delta}=1\right]
\mid \bar{H}_{t+\delta}=\bar{h}_{t+\delta},E_{t+\delta}=1\right]\\
&=V_{t+\delta}^{\bar a}(\bar{h}_{t+\delta}).
\end{aligned}
\label{eq:gcomp-proof-natural-continuation}
\end{equation}
This identifies the post-intervention conditional continuation as the same
function of the current history for either regime. The induction through
\(\delta=0\) gives
\begin{equation}
m_t^{\bar a}(\bar{h}_t)=V_t^{\bar a}(\bar{h}_t)=S^{\bar a}(\bar{h}_t).
\label{eq:gcomp-proof-landmark-identification}
\end{equation}
The censoring mechanism need not be modeled in this recursion: sequential
independent censoring (\ref{ass:sequential-ignorability}) is used only to transport the continuation regression
from patients with \(C>t+\delta+1\) to those censored at \(t+\delta+1\),
while the latter still contribute \(\bar{H}_{t+\delta+1}\) and an event makes
the regression target zero.
\end{proof}

\section{Inverse probability weighting details}
\label{app:ipw-details}
Inverse-probability weighting creates a pseudo-population in which the prescribed treatment sequence is followed and selective loss to follow-up is removed \citep{robins2000marginal,frauen2025modelagnostic}. The two corrections operate over different ranges: treatment is prescribed only through \(t+L-1\), whereas observation must be maintained long enough to determine survival at \(t+\tau\). For \(\delta=0,\ldots,\tau-1\), the indicator \(Y_{i,t+\delta+1}^{\mathrm{obs}}\) evaluates \(Y_{t+\delta+1}^{\mathrm{obs}}\) from Sec.~\ref{sec:ipw_estimator} on patient \(i\)'s trajectory; it includes censoring exactly at \(t+\delta+1\), when survival through that time is known. The cumulative treatment factor is \(R_{i,t:t+\delta}^{A,\bar a}=\prod_{s=0}^{\min\{\delta,L-1\}}r_{i,t+s}^{A,\bar a}\); we estimate \(\pi_{t+\delta}\) by regressing \(A_{i,t+\delta}\) on \(\bar H_{i,t+\delta}\) in the risk set for \(\delta=0,\ldots,L-1\).

In contrast to treatments, censoring reweighting continues after the intervention. The continuation probability \(G_{t+\delta}\) for \(\delta=1,\ldots,\tau-1\) has exactly the conditioning set in \eqref{eq:censoring-continuation}. At stage \(t+\delta\), the cumulative factor is \(R_{i,t:t+\delta}^{C}=\prod_{s=1}^{\delta}r_{i,t+s}^{C}\), with \(R_{i,t:t}^{C}=1\). It corrects the sequential selection required to remain observed through the beginning of interval \(t+\delta\). At the terminal score stage \(t+\tau-1\), the product stops at \(G_{t+\tau-1}\) when \(\tau>1\), rather than including \(G_{t+\tau}\): censoring exactly at the horizon still reveals survival through \(t+\tau\). We estimate \(G_{t+\delta}\) among patients known to be event-free through \(t+\delta\), using \(D_{i,t+\delta}:=\mathbbm{1}\{\widetilde T_i>t+\delta\}\) as the response and \(\bar H_{i,t+\delta}\) as predictors.

Thus, the treatment factor reconstructs adherence only while treatment is imposed, whereas the censoring factor reconstructs observation for as long as the endpoint requires. Empty products equal one throughout. Any weighted term at stage \(t+\delta\) is defined to be zero when \(E_{i,t+\delta}=0\), without evaluating unavailable factors. Since \(Y_{i,t+\tau}^{\mathrm{obs}}=1\) implies \(E_{i,t+\tau-1}=1\), the terminal pseudo-outcome can be written directly as
\begin{equation}
\Phi^{\bar a}_{\mathrm{IPW},i}
:=
Y_{i,t+\tau}^{\mathrm{obs}}
\mathbbm{1}\{E_{i,t+\tau-1}=1\}
R_{i,t:t+\tau-1}^{A,\bar a}
R_{i,t:t+\tau-1}^{C}.
\label{eq:empirical-iptw-ipcw-pseudo-outcome}
\end{equation}
Under (\ref{ass:consistency})--(\ref{ass:sequential-positivity}), its conditional expectation given \(\bar{H}_t=\bar{h}_t\) and \(E_t=1\) equals \(S^{\bar a}(\bar{h}_t)\).

\section{Theoretical properties of the DR learner}
\label{app:dr-pseudo-outcome}

\subsection{Preliminaries and nuisance notation}
\label{app:theory-preliminaries}

Fix a regime \(\bar d\in\{\bar a,\bar b\}\), and write
\(\delta=0,\ldots,L-1\) for prescribed treatment and
\(\delta=0,\ldots,\tau-1\) for the score, which evaluates survival at
\(t+\tau\). Under the observation convention specified in the setup,
censoring exactly at \(t+\tau\) still reveals survival through that time.
Consequently, when \(\tau>1\), the last required censoring nuisance is
\(G_{t+\tau-1}\), not \(G_{t+\tau}\); for \(\tau=1\) none is required.

Collect the true score-level nuisances and a generic candidate collection as
\begin{equation}
\begin{aligned}
m^{\bar d}&:=\bigl(\{m_{t+\delta}^{\bar d}\}_{\delta=0}^{L-1},
\{m_{t+\delta}\}_{\delta=L}^{\tau-1}\bigr),\\
\widetilde m^{\bar d}&:=\bigl(\{\widetilde m_{t+\delta}^{\bar d}\}_{\delta=0}^{L-1},
\{\widetilde m_{t+\delta}\}_{\delta=L}^{\tau-1}\bigr),\\
\pi&:=\{\pi_{t+\delta}\}_{\delta=0}^{L-1},
&\widetilde\pi&:=\{\widetilde\pi_{t+\delta}\}_{\delta=0}^{L-1},\\
G&:=\{G_{t+\delta}\}_{\delta=1}^{\tau-1},
&\widetilde G&:=\{\widetilde G_{t+\delta}\}_{\delta=1}^{\tau-1},
\end{aligned}
\label{eq:theory-nuisance-collections}
\end{equation}
and write \(\eta_0^{\bar d}:=(m^{\bar d},\pi,G)\) and
\(\widetilde\eta^{\bar d}:=(\widetilde m^{\bar d},\widetilde\pi,
\widetilde G)\). In full-horizon formulas, \(m_{t+\delta}^{\bar d}:=m_{t+\delta}\)
and \(\widetilde m_{t+\delta}^{\bar d}:=\widetilde m_{t+\delta}\) for
\(\delta\geq L\): these are the same functions for both regimes, not
separately fitted nuisances. The terminal values satisfy
\(\widetilde m_{t+\tau}=m_{t+\tau}\equiv1\). For candidate mechanisms, define
\begin{equation}
\begin{aligned}
\widetilde r_{t+\delta}^{A,\bar d}
&:=\frac{\mathbbm{1}\{A_{t+\delta}=d_{t+\delta}\}}
{\widetilde\pi_{t+\delta}(d_{t+\delta}\mid\bar H_{t+\delta})},
&&\delta=0,\ldots,L-1,\\
\widetilde R_{t:t+\delta}^{A,\bar d}
&:=\prod_{s=0}^{\min\{\delta,L-1\}}\widetilde r_{t+s}^{A,\bar d},
&&\delta=0,\ldots,\tau-1,\\
\widetilde r_{t+\delta}^{C}
&:=\widetilde G_{t+\delta}(\bar H_{t+\delta})^{-1},
&&\delta=1,\ldots,\tau-1,\\
\widetilde R_{t:t+\delta}^{C}
&:=\prod_{s=1}^{\delta}\widetilde r_{t+s}^{C},
&&\delta=0,\ldots,\tau-1,
\end{aligned}
\label{eq:theory-candidate-weight-factors}
\end{equation}
where \(G_{t+\delta}\) has the conditioning set in
\eqref{eq:censoring-continuation}, and \(\widetilde G_{t+\delta}\) is defined
analogously. In particular, \(\widetilde R_{t:t}^{C}=1\).
The population score evaluated at the candidate nuisances is
\begin{align}
\Phi^{\bar d}_{\mathrm{DR}}(O;\widetilde\eta^{\bar d})
&:=
\widetilde m_t^{\bar d}(\bar H_t)
\\
&\quad+\sum_{\delta=0}^{\tau-1}
\mathbbm{1}\{E_{t+\delta}=1\}
\widetilde R_{t:t+\delta}^{A,\bar d}
\widetilde R_{t:t+\delta}^{C}
\\[-1mm]
&\qquad\times\left\{
Y_{t+\delta+1}^{\mathrm{obs}}\widetilde m_{t+\delta+1}^{\bar d}(\bar H_{t+\delta+1})
-\widetilde m_{t+\delta}^{\bar d}(\bar H_{t+\delta})
\right\}.
\label{eq:population-dr-pseudo-outcome}
\end{align}
As before, a product involving an unavailable successor history is read as
zero after an event.

The remainder calculation uses the stage-specific ratios
\begin{equation}
\alpha_{t+\delta}^{\bar d}:=
\begin{cases}
\displaystyle
\frac{\pi_{t+\delta}(d_{t+\delta}\mid\bar H_{t+\delta})}
{\widetilde\pi_{t+\delta}(d_{t+\delta}\mid\bar H_{t+\delta})},
& 0\leq\delta<L,\\[2mm]
1, & L\leq\delta<\tau,
\end{cases}
\qquad
\gamma_{t+\delta}:=
\begin{cases}
1, & \delta=0,\\[1mm]
\displaystyle
\frac{G_{t+\delta}(\bar H_{t+\delta})}
{\widetilde G_{t+\delta}(\bar H_{t+\delta})}, & 1\leq\delta<\tau,
\end{cases}
\label{eq:assignment-ratios}
\end{equation}
and the pre-stage weight
\begin{equation}
\widetilde\Lambda_t^{\bar d}:=1,
\qquad
\widetilde\Lambda_{t+\delta}^{\bar d}
:=
\mathbbm{1}\{E_{t+\delta-1}=1\}Y_{t+\delta}^{\mathrm{obs}}
\widetilde R_{t:t+\delta-1}^{A,\bar d}
\widetilde R_{t:t+\delta-1}^{C},
\quad\delta=1,\ldots,\tau-1.
\label{eq:pre-stage-weight}
\end{equation}
All conditional statements below are for almost every \(\bar h_t\) in the
at-risk support. In addition to (\ref{ass:consistency})--(\ref{ass:sequential-positivity}), we require the displayed terms to
be integrable and the candidate inverse weights to be bounded. For estimated
nuisances, all population calculations are conditional on the sample used to
fit them.

\subsection{Exact conditional remainder}
\label{app:exact-remainder}

\begin{lemma}[Conditional second-order remainder]
\label{lem:conditional-second-order-remainder}
Under the conditions in Sec.~\ref{app:theory-preliminaries},
\begin{align}
&\mathbb E\!\left[
\Phi^{\bar d}_{\mathrm{DR}}(O;\widetilde\eta^{\bar d})
-\Phi^{\bar d}_{\mathrm{DR}}(O;\eta_0^{\bar d})
\mid \bar H_t=\bar h_t,E_t=1
\right]
\nonumber\\
&\quad=
\mathbb E\!\left[
\sum_{\delta=0}^{\tau-1}
\widetilde\Lambda_{t+\delta}^{\bar d}
\bigl(1-\gamma_{t+\delta}\alpha_{t+\delta}^{\bar d}\bigr)
\bigl\{\widetilde m_{t+\delta}^{\bar d}-m_{t+\delta}^{\bar d}\bigr\}(\bar H_{t+\delta})
\mid \bar H_t=\bar h_t,E_t=1
\right].
\label{eq:dr-exact-remainder}
\end{align}
Moreover,
\begin{equation}
\mathbb E\!\left[
\Phi^{\bar d}_{\mathrm{DR}}(O;\eta_0^{\bar d})
\mid\bar H_t=\bar h_t,E_t=1
\right]
=S^{\bar d}(\bar h_t).
\label{eq:true-score-targeting}
\end{equation}
Every summand in \eqref{eq:dr-exact-remainder} is a continuation-regression
error multiplied by a treatment and/or censoring error. For \(\delta\geq L\),
the continuation error is that of the shared \(m_{t+\delta}\). Explicitly,
\begin{equation}
1-\gamma_{t+\delta}\alpha_{t+\delta}^{\bar d}
=
\begin{cases}
\displaystyle
\frac{\widetilde\pi_t(d_t\mid\bar H_t)-\pi_t(d_t\mid\bar H_t)}
{\widetilde\pi_t(d_t\mid\bar H_t)}, & \delta=0,\\[4mm]
\begin{aligned}
&\frac{\widetilde\pi_{t+\delta}(d_{t+\delta}\mid\bar H_{t+\delta})
-\pi_{t+\delta}(d_{t+\delta}\mid\bar H_{t+\delta})}
{\widetilde\pi_{t+\delta}(d_{t+\delta}\mid\bar H_{t+\delta})}\\[1mm]
&\quad+\frac{\pi_{t+\delta}(d_{t+\delta}\mid\bar H_{t+\delta})}
{\widetilde\pi_{t+\delta}(d_{t+\delta}\mid\bar H_{t+\delta})}
\frac{\widetilde G_{t+\delta}(\bar H_{t+\delta})-G_{t+\delta}(\bar H_{t+\delta})}
{\widetilde G_{t+\delta}(\bar H_{t+\delta})}
\end{aligned}, & 1\leq\delta<L,\\[4mm]
\displaystyle
\frac{\widetilde G_{t+\delta}(\bar H_{t+\delta})-G_{t+\delta}(\bar H_{t+\delta})}
{\widetilde G_{t+\delta}(\bar H_{t+\delta})}, & L\leq\delta<\tau.
\end{cases}
\label{eq:assignment-error-decomposition}
\end{equation}
\end{lemma}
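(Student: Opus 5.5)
We argue by a telescoping (summation-by-parts) rearrangement of the score, followed by stagewise conditioning. First we rewrite the candidate score using the pre-stage weights \(\widetilde\Lambda^{\bar d}_{t+\delta}\) from \eqref{eq:pre-stage-weight}. Since \(\mathbbm{1}\{E_{t+\delta}=1\}\) is implied by \(\mathbbm{1}\{E_{t+\delta-1}=1\}Y^{\mathrm{obs}}_{t+\delta}\) together with \(C>t+\delta\), we can write \(\mathbbm{1}\{E_{t+\delta}=1\}\widetilde R^{A,\bar d}_{t:t+\delta}\widetilde R^C_{t:t+\delta}=\widetilde\Lambda^{\bar d}_{t+\delta}\,\mathbbm{1}\{C>t+\delta\}\,\widetilde r^{A,\bar d}_{t+\delta}\widetilde r^C_{t+\delta}\). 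Here we use the convention that \(\widetilde r^{A}\equiv1\) for \(\delta\ge L\) and \(\widetilde r^C_t\equiv1\), and \(\mathbbm{1}\{C>t\}\) is automatic on \(E_t=1\). Regrouping the sum by the value \(\widetilde m^{\bar d}_{t+\delta}\), the score becomes
\[
\Phi^{\bar d}_{\mathrm{DR}}(O;\widetilde\eta^{\bar d})=\sum_{\delta=0}^{\tau-1}\widetilde\Lambda^{\bar d}_{t+\delta}\Bigl(1-\mathbbm{1}\{C>t+\delta\}\widetilde r^{A,\bar d}_{t+\delta}\widetilde r^C_{t+\delta}\Bigr)\widetilde m^{\bar d}_{t+\delta}(\bar H_{t+\delta})+\widetilde\Lambda^{\bar d}_{t+\tau}\cdot 1 .
\]
The term \(\widetilde\Lambda^{\bar d}_{t+\tau}\) is the IPW term and uses \(m_{t+\tau}\equiv1\). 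The same identity holds at \(\eta_0\), with \(\widetilde\Lambda\) replaced by the true-weight version \(\Lambda\).

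\textbf{Step 1: conditional expectation of the bracket.} We condition on \(\bar H_{t+\delta}\) and on the event defining \(\widetilde\Lambda^{\bar d}_{t+\delta}\neq0\), namely \(T>t+\delta\) and \(C>t+\delta-1\). Two computations are needed:
\[
\mathbb E[\mathbbm{1}\{C>t+\delta\}\mid\cdot]=G_{t+\delta},\qquad \mathbb E[\mathbbm{1}\{A_{t+\delta}=d_{t+\delta}\}\mid\bar H_{t+\delta},E_{t+\delta}=1]=\pi_{t+\delta}(d_{t+\delta}\mid\bar H_{t+\delta}).
\]
With these, the bracket has conditional mean \(1-\gamma_{t+\delta}\alpha^{\bar d}_{t+\delta}\). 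The decomposition \eqref{eq:assignment-error-decomposition} is then the algebraic identity \(1-xy=(1-x)+x(1-y)\), with \(x=\alpha\) and \(y=\gamma\).

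\textbf{Step 2: difference of scores.} It is cleaner not to difference the rearranged forms directly. Instead, write \(\widetilde m=m+(\widetilde m-m)\) in the original candidate score. The part linear in \(\widetilde m-m\) gives exactly
\[
\sum_\delta\widetilde\Lambda^{\bar d}_{t+\delta}\bigl(1-\mathbbm{1}\{C>t+\delta\}\widetilde r^{A}\widetilde r^{C}\bigr)(\widetilde m-m)_{t+\delta},
\]
whose conditional mean, by Step 1 and the tower property, is the right side of \eqref{eq:dr-exact-remainder}. The remaining part is \(m_t+\sum_\delta(\text{candidate weights})\,[Y^{\mathrm{obs}}_{t+\delta+1}m_{t+\delta+1}-m_{t+\delta}]\). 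Each summand has conditional mean zero given \((\bar H_{t+\delta},A_{t+\delta}=d_{t+\delta},E_{t+\delta}=1)\) for any weights that are measurable with respect to the stage-\(t+\delta\) information. This follows because \(\mathbb E[Y^{\mathrm{obs}}_{t+\delta+1}m_{t+\delta+1}\mid\cdot]=Q_{t+\delta}=m_{t+\delta}\) for \(\delta<L\). For \(\delta\ge L\), where \(A\) is unrestricted, we average over \(\pi\). The same argument applied at \(\eta_0\) yields \(\mathbb E[\Phi(O;\eta_0)\mid\bar h_t]=m_t(\bar h_t)\). Proposition~\ref{prop:gcomp_identification} then gives \eqref{eq:true-score-targeting}, and subtracting the two expressions gives \eqref{eq:dr-exact-remainder}.

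\textbf{Main obstacle.} The delicate point is the martingale step \(\mathbb E[Y^{\mathrm{obs}}_{t+\delta+1}m_{t+\delta+1}(\bar H_{t+\delta+1})\mid\bar H_{t+\delta},A_{t+\delta},E_{t+\delta}=1]=Q_{t+\delta}\). The regression \(Q\) in \eqref{eq:action_value} is defined with \(Y_{t+\delta+1}\), whereas the score uses \(Y^{\mathrm{obs}}\). These coincide on \(E_{t+\delta}=1\) because of the observation convention: censoring at \(t+\delta+1\) still reveals survival and the successor history \(\bar H_{t+\delta+1}\), and an event makes the product zero. We will verify this carefully, together with the regrouping of the indicator \(\mathbbm{1}\{E_{t+\delta}=1\}\) into \(\widetilde\Lambda\cdot\mathbbm{1}\{C>t+\delta\}\) at the stages \(\delta=0\), \(\delta=L\) and \(\delta=\tau-1\).
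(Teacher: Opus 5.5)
Your proposal is correct and is essentially the paper's argument. The paper likewise uses three things: residuals built from the true continuations have zero conditional mean under any stage-predictable weights, whether candidate or true; the score difference therefore reduces to terms linear in $\widetilde m^{\bar d}_{t+\delta}-m^{\bar d}_{t+\delta}$; and the two occurrences of each such error are collected with coefficients $\widetilde\Lambda^{\bar d}_{t+\delta}$ and, after one-step conditioning, $\widetilde\Lambda^{\bar d}_{t+\delta}\gamma_{t+\delta}\alpha^{\bar d}_{t+\delta}$. The only difference is cosmetic: you do the summation by parts on the whole score before differencing, whereas the paper differences first and then collects coefficients. The step you flag as the main obstacle is immediate, since $Y^{\mathrm{obs}}_{t+\delta+1}=Y_{t+\delta+1}$ on $\{E_{t+\delta}=1\}$.
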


\begin{proof}
Condition on the nuisance-training sample, and set
\begin{equation}
\varepsilon_{t+\delta}^{\bar d}(\bar H_{t+\delta})
:=
\widetilde m_{t+\delta}^{\bar d}(\bar H_{t+\delta})
-m_{t+\delta}^{\bar d}(\bar H_{t+\delta}).
\label{eq:dr-proof-continuation-error}
\end{equation}
The recursion in equations~(\ref{eq:action_value})--(\ref{eq:continuation_survival})
implies
\begin{equation}
\mathbb E\!\left[
\mathbbm{1}\{E_{t+\delta}=1\}
\widetilde R_{t:t+\delta}^{A,\bar d}
\widetilde R_{t:t+\delta}^{C}
\left\{
Y_{t+\delta+1}^{\mathrm{obs}}m_{t+\delta+1}^{\bar d}(\bar H_{t+\delta+1})
-m_{t+\delta}^{\bar d}(\bar H_{t+\delta})
\right\}
\mid \bar H_t,E_t=1
\right]=0.
\label{eq:true-residual-zero}
\end{equation}
For \(\delta<L\), the current treatment factor restricts the residual to
\(A_{t+\delta}=d_{t+\delta}\). For \(\delta\geq L\), no current treatment factor appears and the
conditional expectation averages over the natural treatment law. Sequential
independent censoring (\ref{ass:sequential-ignorability}) and the observation convention justify the use of
\(Y_{t+\delta+1}^{\mathrm{obs}}\): survival and the successor history are observed
when censoring occurs exactly at \(t+\delta+1\), whereas the product is zero after an
event.

Subtracting the true score from the candidate score and applying
\eqref{eq:true-residual-zero} gives
\begin{align}
&\mathbb E\!\left[
\Phi^{\bar d}_{\mathrm{DR}}(O;\widetilde\eta^{\bar d})
-\Phi^{\bar d}_{\mathrm{DR}}(O;\eta_0^{\bar d})
\mid \bar H_t=\bar h_t,E_t=1
\right]
\nonumber\\
&\quad=
\mathbb E\!\left[
\varepsilon_t^{\bar d}(\bar H_t)
+\sum_{\delta=0}^{\tau-1}
\mathbbm{1}\{E_{t+\delta}=1\}
\widetilde R_{t:t+\delta}^{A,\bar d}
\widetilde R_{t:t+\delta}^{C}
\left\{
Y_{t+\delta+1}^{\mathrm{obs}}\varepsilon_{t+\delta+1}^{\bar d}(\bar H_{t+\delta+1})
-\varepsilon_{t+\delta}^{\bar d}(\bar H_{t+\delta})
\right\}
\mid \bar H_t=\bar h_t,E_t=1
\right].
\label{eq:dr-bias-before-telescoping}
\end{align}
At the landmark,
\begin{equation}
\mathbb E\!\left[
\mathbbm{1}\{E_t=1\}
\widetilde R_{t:t}^{A,\bar d}\widetilde R_{t:t}^{C}
\varepsilon_t^{\bar d}(\bar H_t)
\mid \bar H_t,E_t=1
\right]
=\alpha_t^{\bar d}\varepsilon_t^{\bar d}(\bar H_t).
\label{eq:landmark-coefficient}
\end{equation}
For \(\delta=1,\ldots,\tau-1\), the positive occurrence of
\(\varepsilon_{t+\delta}^{\bar d}\) inherited from the previous stage has
coefficient \(\widetilde\Lambda_{t+\delta}^{\bar d}\), while the negative
occurrence at the current stage satisfies
\begin{align}
&\mathbb E\!\left[
\mathbbm{1}\{E_{t+\delta}=1\}
\widetilde R_{t:t+\delta}^{A,\bar d}
\widetilde R_{t:t+\delta}^{C}
\varepsilon_{t+\delta}^{\bar d}(\bar H_{t+\delta})
\mid \bar H_t,E_t=1
\right]
\nonumber\\
&\qquad=
\mathbb E\!\left[
\widetilde\Lambda_{t+\delta}^{\bar d}
\gamma_{t+\delta}\alpha_{t+\delta}^{\bar d}
\varepsilon_{t+\delta}^{\bar d}(\bar H_{t+\delta})
\mid \bar H_t,E_t=1
\right].
\label{eq:negative-stage-coefficient}
\end{align}
Indeed, passing from the preceding observed risk set into \(E_{t+\delta}=1\)
contributes \(G_{t+\delta}/\widetilde G_{t+\delta}=\gamma_{t+\delta}\).
The current treatment contributes \(\pi_{t+\delta}/\widetilde\pi_{t+\delta}
=\alpha_{t+\delta}^{\bar d}\) only for \(\delta<L\); afterward treatment
is natural and \(\alpha_{t+\delta}^{\bar d}=1\). Collecting the two
occurrences of each \(\varepsilon_{t+\delta}^{\bar d}\) in
\eqref{eq:dr-bias-before-telescoping}, with
\(\varepsilon_{t+\tau}^{\bar d}=0\), proves \eqref{eq:dr-exact-remainder}.
Equation~\ref{eq:assignment-error-decomposition} is an algebraic expansion
of \(1-\gamma_{t+\delta}\alpha_{t+\delta}^{\bar d}\).

At the true nuisances, each augmentation term has conditional mean zero by
\eqref{eq:true-residual-zero}. Hence the conditional mean of the true score is
\(m_t^{\bar d}(\bar h_t)\), which equals
\(S^{\bar d}(\bar h_t)\) by
Proposition~\ref{prop:gcomp_identification}. This proves
\eqref{eq:true-score-targeting}.
\end{proof}

\subsection{Double robustness and Neyman-orthogonality}
\label{app:dr-orthogonality}

\mainDRtheorem*

\begin{proof}
We first establish the result for a fixed regime
\(\bar d\in\{\bar a,\bar b\}\). The true score satisfies
\begin{equation}
\mathbb E\!\left[
\Phi^{\bar d}_{\mathrm{DR}}(O;\eta_0^{\bar d})
\mid\bar H_t=\bar h_t,E_t=1
\right]
=S^{\bar d}(\bar h_t).
\end{equation}
The same equality holds with \(\eta_0^{\bar d}\) replaced by
\(\widetilde\eta^{\bar d}\) if either
\begin{enumerate}
\item[(i)] \(\widetilde m_{t+\delta}^{\bar d}=m_{t+\delta}^{\bar d}\)
for \(\delta=0,\ldots,L-1\) and \(\widetilde m_{t+\delta}=m_{t+\delta}\)
for \(\delta=L,\ldots,\tau-1\); or
\item[(ii)] \(\widetilde\pi_{t+\delta}=\pi_{t+\delta}\) for
\(\delta=0,\ldots,L-1\) and \(\widetilde G_{t+\delta}=G_{t+\delta}\)
for \(\delta=1,\ldots,\tau-1\).
\end{enumerate}
Correct targeting is \eqref{eq:true-score-targeting}. Under condition (i),
every continuation error in \eqref{eq:dr-exact-remainder} is zero. Under
condition (ii), \(\alpha_{t+\delta}^{\bar d}=\gamma_{t+\delta}=1\) over their respective index
ranges, so the same remainder is zero.

For the regime-specific conditional moment, write
\begin{equation}
\Psi_{\bar d}(\eta;\bar h_t)
:=
\mathbb E\!\left[
\Phi^{\bar d}_{\mathrm{DR}}(O;\eta)
-S^{\bar d}(\bar h_t)
\mid\bar H_t=\bar h_t,E_t=1
\right],
\label{eq:dr-conditional-moment}
\end{equation}
every regular path \(\epsilon\mapsto\eta_\epsilon^{\bar d}\) through the full
nuisance vector, with \(\eta_0^{\bar d}\) at \(\epsilon=0\) and for which
differentiation under the conditional expectation is valid, satisfies
\begin{equation}
\left.
\frac{\mathrm d}{\mathrm d\epsilon}
\Psi_{\bar d}(\eta_\epsilon^{\bar d};\bar h_t)
\right|_{\epsilon=0}=0.
\end{equation}
For a regular path through the truth, both
\(1-\gamma_{t+\delta,\epsilon}\alpha_{t+\delta,\epsilon}^{\bar d}\) and
\(m_{t+\delta,\epsilon}^{\bar d}-m_{t+\delta}^{\bar d}\) are first order in \(\epsilon\),
while \(\widetilde\Lambda_{t+\delta,\epsilon}^{\bar d}\) remains bounded locally. The
finite sum in \eqref{eq:dr-exact-remainder} is therefore \(O(\epsilon^2)\), including
when all nuisance components vary along the same path. Its first derivative at
zero vanishes. Linearity gives the assertions for the contrast. In particular,
\begin{equation}
\mathbb E\!\left[
\left\{
\Phi^{\bar a}_{\mathrm{DR}}(O;\eta_0^{\bar a})
-\Phi^{\bar b}_{\mathrm{DR}}(O;\eta_0^{\bar b})
\right\}
\mid\bar H_t=\bar h_t,E_t=1
\right]
=\theta^{\bar a,\bar b}(\bar h_t).
\label{eq:dr-contrast-targeting}
\end{equation}
\end{proof}

\subsection{Statistical convergence}
\label{app:statistical-convergence}

For the two regimes, let
\begin{equation}
\eta_0
:=
\left(
\{\{m_{t+\delta}^{\bar d}\}_{\delta=0}^{L-1}:\bar d\in\{\bar a,\bar b\}\},
\{m_{t+\delta}\}_{\delta=L}^{\tau-1},
\{\pi_{t+\delta}\}_{\delta=0}^{L-1},
\{G_{t+\delta}\}_{\delta=1}^{\tau-1}
\right)
\label{eq:contrast-nuisance-collection}
\end{equation}
collect the true score-level nuisances, and, for a collection \(\eta\) of the
same form, define
\begin{equation}
\Phi^{\bar a,\bar b}_{\mathrm{DR}}(O;\eta)
:=
\Phi^{\bar a}_{\mathrm{DR}}(O;\eta^{\bar a})
-\Phi^{\bar b}_{\mathrm{DR}}(O;\eta^{\bar b}).
\label{eq:dr-contrast-pseudo-outcome}
\end{equation}
The oracle pseudo-outcome is
\(\Phi^{\bar a,\bar b}_{\mathrm{DR}}(O;\eta_0)\). Let
\(\widehat\theta_{\mathrm{or}}^{\bar a,\bar b}\) be the output of exactly the same
second-stage regression procedure used for
\(\widehat\theta_{\mathrm{DR}}^{\bar a,\bar b}\), with the same function class
\(\mathcal F\), loss, tuning rule, and sample splits, but with the oracle
pseudo-outcome in place of its cross-fitted estimate. For a fixed landmark
history, define the pointwise mean-squared risk
\begin{equation}
\mathcal R_{\bar h_t}(\widehat\theta)
:=
\mathbb E\!\left[
\left\{\widehat\theta(\bar h_t)
-\theta^{\bar a,\bar b}(\bar h_t)\right\}^2
\right],
\qquad
e_{\mathrm{or}}^{\bar a,\bar b}(n)
:=\mathcal R_{\bar h_t}
(\widehat\theta_{\mathrm{or}}^{\bar a,\bar b}).
\label{eq:oracle-risk-definition}
\end{equation}

\paragraph{Cross-fitting, overlap, and boundedness.}
Use a fixed number \(V\) of folds. If fold \(v\) contains \(n_v\) observations,
write \(\rho_v:=n_v/n\), assume the fold proportions are bounded away from
zero, and note that \(\sum_{v=1}^V\rho_v=1\). For held-out fold \(v\), write
\(\mathcal T_v\) for its complementary nuisance-training sample and attach
\((-v)\) to the resulting estimates. Each estimated pseudo-outcome is evaluated
only on its held-out fold, and the same fold partition is used at every
backward stage. We take \(L\) and \(\tau\) to be fixed. Strengthening (\ref{ass:treatment-positivity}) and (\ref{ass:sequential-positivity}) to
uniform true overlap for the rate analysis, assume that, on the relevant
regime-specific supports, for some \(c_0>0\),
\begin{equation}
\pi_{t+\delta}(d_{t+\delta}\mid\bar H_{t+\delta})\geq c_0,
\quad\delta=0,\ldots,L-1,\quad\bar d\in\{\bar a,\bar b\},
\qquad
G_{t+\delta}(\bar H_{t+\delta})\geq c_0,
\quad\delta=1,\ldots,\tau-1.
\label{eq:uniform-true-overlap}
\end{equation}
This is distinct from bounded fitted inverse weights. In implementation, the
fitted denominators entering the score are deterministically lower-clipped at
a fixed \(c_f\in(0,c_0)\):
\begin{equation}
\widehat\pi_{t+\delta}^{(-v)}(d_{t+\delta}\mid\bar H_{t+\delta})\geq c_f,
\quad\delta=0,\ldots,L-1,
\qquad
\widehat G_{t+\delta}^{(-v)}(\bar H_{t+\delta})\geq c_f,
\quad\delta=1,\ldots,\tau-1.
\label{eq:fitted-weight-clipping}
\end{equation}
Thus fitted overlap is enforced rather than assumed as a stochastic event; all
errors below refer to these clipped fits. A lower clip is sufficient because
these fitted probabilities enter the proof only through inverse weights; no
upper truncation is needed. Assume also
\(\lvert\widehat m_{t+\delta}^{\bar d,(-v)}\rvert\leq B<\infty\) for
\(\delta=0,\ldots,\tau-1\), with the shared fitted continuation when
\(\delta\geq L\). All rate statements
are as \(n\to\infty\).

\paragraph{Separate fourth-moment nuisance rates.}
For \(\bar d\in\{\bar a,\bar b\}\), let \(\mathbb E^{\bar d}\) denote
expectation under the corresponding uncensored history law: treatment follows
regime \(\bar d\) through \(\delta=L-1\), treatment is natural afterward, and censoring
is eliminated through the follow-up horizon while the event process remains
natural. This is the regime-specific law identified by the true treatment and
censoring likelihood ratios in the score. For a fresh trajectory from this law
and a fold-specific random function \(f_{t+\delta,v}\), define the joint
training-sample/evaluation-history norm
\begin{equation}
\lVert f_{t+\delta,v}\rVert_{4,\bar d,t}
:=
\left(
\mathbb E_{\mathcal T_v}\mathbb E^{\bar d}\!\left[
\lvert f_{t+\delta,v}(\bar H_{t+\delta})\rvert^4
\mid\bar H_t=\bar h_t,E_t=1
\right]
\right)^{1/4}.
\label{eq:conditional-joint-l4-norm}
\end{equation}
The fresh evaluation trajectory in \(\mathbb E^{\bar d}\) is independent of
\(\mathcal T_v\), as ensured foldwise by sample splitting. Define
\begin{align}
e_{\pi,t+\delta}^{\bar d}(n)
&:=\max_{1\leq v\leq V}
\left\lVert
\widehat\pi_{t+\delta}^{(-v)}(d_{t+\delta}\mid\cdot)
-\pi_{t+\delta}(d_{t+\delta}\mid\cdot)
\right\rVert_{4,\bar d,t}^{2},
&&\delta=0,\ldots,L-1,
\nonumber\\
e_{G,t+\delta}^{\bar d}(n)
&:=\max_{1\leq v\leq V}
\left\lVert
\widehat G_{t+\delta}^{(-v)}-G_{t+\delta}
\right\rVert_{4,\bar d,t}^{2},
&&\delta=1,\ldots,\tau-1,
\nonumber\\
e_{m,t+\delta}^{\bar d}(n)
&:=\max_{1\leq v\leq V}
\left\lVert
\widehat m_{t+\delta}^{\bar d,(-v)}-m_{t+\delta}^{\bar d}
\right\rVert_{4,\bar d,t}^{2},
&&\delta=0,\ldots,\tau-1.
\label{eq:nuisance-rate-definition}
\end{align}
For \(\delta\geq L\), \(\widehat m_{t+\delta}^{\bar d,(-v)}\) abbreviates the
same shared fit \(\widehat m_{t+\delta}^{(-v)}\) for both regimes. Its error
norm can still depend on \(\bar d\), because the evaluation-history laws differ.
These are squared conditional \(L_4\) error scales: each is the maximum over
folds of the square root of the corresponding conditional fourth moment in
\eqref{eq:conditional-joint-l4-norm}. They are not mean-squared errors. Thus,
if two nuisance \(L_4\) norm errors have orders \(a_n\) and \(b_n\), their
corresponding \(e\)-quantities have orders \(a_n^2\) and \(b_n^2\), and their
product has order \((a_nb_n)^2\), the scale of a squared second-order bias. We
assume these quantities are finite and tend to zero. Taking a maximum is
harmless for fixed \(V\) and makes the foldwise requirement explicit. No mixed
product-moment rate is assumed.

For each fold and regime, define the conditional score perturbation
\begin{equation}
\widehat b_v^{\bar d}(\bar h_t)
:=
\mathbb E\!\left[
\Phi^{\bar d}_{\mathrm{DR}}(O;\widehat\eta^{\bar d,(-v)})
-\Phi^{\bar d}_{\mathrm{DR}}(O;\eta_0^{\bar d})
\mid\bar H_t=\bar h_t,E_t=1,\mathcal T_v
\right].
\label{eq:estimated-score-bias}
\end{equation}

\begin{stabilitycondition}[Generated-outcome regression stability]
\label{ass:generated-outcome-stability}
The second-stage regression procedure satisfies the Kennedy-style stability
bound
\begin{align}
\mathcal R_{\bar h_t}
(\widehat\theta^{\bar a,\bar b}_{\mathrm{DR}})
&\lesssim
e_{\mathrm{or}}^{\bar a,\bar b}(n)
+\sum_{v=1}^{V}\rho_v
\mathbb E\!\left[
\left\{\widehat b_v^{\bar a}(\bar h_t)
-\widehat b_v^{\bar b}(\bar h_t)\right\}^2
\right].
\label{eq:generated-outcome-stability}
\end{align}
\end{stabilitycondition}
This high-level condition controls the effect of replacing the oracle
regression targets by cross-fitted generated targets
\citep{kennedy2023towards}. The oracle and feasible learners use the same
regression method, function class, loss, tuning rule, and splits. The condition
is conceptually separate from Neyman-orthogonality and is not the source of the
nuisance-product structure. 

For brevity, write
\begin{equation}
B_\eta(n)
:=
\sum_{\bar d\in\{\bar a,\bar b\}}
\left\{
\sum_{\delta=0}^{L-1}e_{\pi,t+\delta}^{\bar d}(n)e_{m,t+\delta}^{\bar d}(n)
+\sum_{\delta=1}^{\tau-1}e_{G,t+\delta}^{\bar d}(n)e_{m,t+\delta}^{\bar d}(n)
\right\}.
\label{eq:nuisance-product-remainder}
\end{equation}

\mainRateTheorem*

\begin{proof}
Under the conditions stated above and
Stability Condition~\ref{ass:generated-outcome-stability}, with fixed \(L\) and
\(\tau\),
\begin{align}
\mathcal R_{\bar h_t}
(\widehat\theta^{\bar a,\bar b}_{\mathrm{DR}})
&\lesssim
e_{\mathrm{or}}^{\bar a,\bar b}(n)+B_\eta(n)
\nonumber\\
&=
e_{\mathrm{or}}^{\bar a,\bar b}(n)
+\sum_{\bar d\in\{\bar a,\bar b\}}
\left\{
\sum_{\delta=0}^{L-1}e_{\pi,t+\delta}^{\bar d}(n)e_{m,t+\delta}^{\bar d}(n)
+\sum_{\delta=1}^{\tau-1}e_{G,t+\delta}^{\bar d}(n)e_{m,t+\delta}^{\bar d}(n)
\right\}.
\label{eq:dr-rate-direct}
\end{align}
\addtocounter{equation}{2}
Fix a held-out fold \(v\) and condition on its nuisance-training sample
\(\mathcal T_v\). Apply
Lemma~\ref{lem:conditional-second-order-remainder} with
\(\widetilde\eta^{\bar d}=\widehat\eta^{\bar d,(-v)}\). Uniform true overlap,
deterministic fitted-denominator clipping, and the fixed horizons bound all
pre-stage weights and likelihood ratios. Equation
\ref{eq:assignment-error-decomposition} expresses every summand as a
continuation error times a treatment error, a censoring error, or their sum.
Jensen's inequality, the squared triangle inequality, and the fixed number of
stages therefore give, after averaging over \(\mathcal T_v\),
\begin{align}
\mathbb E\!\left[\{\widehat b_v^{\bar d}(\bar h_t)\}^2\right]
&\lesssim
\sum_{\delta=0}^{L-1}
\mathbb E_{\mathcal T_v}\mathbb E^{\bar d}\!\left[
\left\{\widehat\pi_{t+\delta}^{(-v)}(d_{t+\delta}\mid\bar H_{t+\delta})
-\pi_{t+\delta}(d_{t+\delta}\mid\bar H_{t+\delta})\right\}^2
\right.
\nonumber\\[-1mm]
&\hspace{18mm}\left.
\left\{\widehat m_{t+\delta}^{\bar d,(-v)}(\bar H_{t+\delta})
-m_{t+\delta}^{\bar d}(\bar H_{t+\delta})\right\}^2
\mid\bar H_t=\bar h_t,E_t=1
\right]
\nonumber\\
&\quad+
\sum_{\delta=1}^{\tau-1}
\mathbb E_{\mathcal T_v}\mathbb E^{\bar d}\!\left[
\left\{\widehat G_{t+\delta}^{(-v)}(\bar H_{t+\delta})
-G_{t+\delta}(\bar H_{t+\delta})\right\}^2
\right.
\nonumber\\[-1mm]
&\hspace{18mm}\left.
\left\{\widehat m_{t+\delta}^{\bar d,(-v)}(\bar H_{t+\delta})
-m_{t+\delta}^{\bar d}(\bar H_{t+\delta})\right\}^2
\mid\bar H_t=\bar h_t,E_t=1
\right].
\label{eq:dr-bias-product-bound}
\end{align}
The change from the observed weighted histories in the exact remainder to
\(\mathbb E^{\bar d}\) changes only the implicit constant: the corresponding
Radon--Nikodym factors are uniformly bounded under
equations~(\ref{eq:uniform-true-overlap})--(\ref{eq:fitted-weight-clipping}), and the
number of stages is fixed.

Apply H\"older's inequality to each mixed term in
\eqref{eq:dr-bias-product-bound}. For example, with the expectation taken over
the joint law of \(\mathcal T_v\) and the regime-specific evaluation history,
\begin{align}
&\mathbb E_{\mathcal T_v}\mathbb E^{\bar d}\!\left[
\left\{\widehat\pi_{t+\delta}^{(-v)}(d_{t+\delta}\mid\bar H_{t+\delta})
-\pi_{t+\delta}(d_{t+\delta}\mid\bar H_{t+\delta})\right\}^2
\left\{\widehat m_{t+\delta}^{\bar d,(-v)}(\bar H_{t+\delta})
-m_{t+\delta}^{\bar d}(\bar H_{t+\delta})\right\}^2
\mid\bar H_t=\bar h_t,E_t=1
\right]
\nonumber\\
&\quad\leq
\left(
\mathbb E_{\mathcal T_v}\mathbb E^{\bar d}\!\left[
\left\lvert\widehat\pi_{t+\delta}^{(-v)}(d_{t+\delta}\mid\bar H_{t+\delta})
-\pi_{t+\delta}(d_{t+\delta}\mid\bar H_{t+\delta})\right\rvert^4
\mid\bar H_t=\bar h_t,E_t=1
\right]
\right)^{1/2}
\nonumber\\
&\qquad\times
\left(
\mathbb E_{\mathcal T_v}\mathbb E^{\bar d}\!\left[
\left\lvert\widehat m_{t+\delta}^{\bar d,(-v)}(\bar H_{t+\delta})
-m_{t+\delta}^{\bar d}(\bar H_{t+\delta})\right\rvert^4
\mid\bar H_t=\bar h_t,E_t=1
\right]
\right)^{1/2}
\nonumber\\
&\quad\leq e_{\pi,t+\delta}^{\bar d}(n)e_{m,t+\delta}^{\bar d}(n).
\label{eq:holder-derived-treatment-product}
\end{align}
The identical calculation with \(\widehat G_{t+\delta}^{(-v)}-G_{t+\delta}\) gives
\begin{align}
&\mathbb E_{\mathcal T_v}\mathbb E^{\bar d}\!\left[
\{\widehat G_{t+\delta}^{(-v)}(\bar H_{t+\delta})
-G_{t+\delta}(\bar H_{t+\delta})\}^2
\right.
\nonumber\\[-1mm]
&\hspace{18mm}\left.
\{\widehat m_{t+\delta}^{\bar d,(-v)}(\bar H_{t+\delta})
-m_{t+\delta}^{\bar d}(\bar H_{t+\delta})\}^2
\mid\bar H_t=\bar h_t,E_t=1
\right]
\nonumber\\
&\quad\leq e_{G,t+\delta}^{\bar d}(n)e_{m,t+\delta}^{\bar d}(n).
\label{eq:holder-derived-censoring-product}
\end{align}
Consequently, uniformly over the fixed folds,
\begin{equation}
\mathbb E\!\left[\{\widehat b_v^{\bar d}(\bar h_t)\}^2\right]
\lesssim
\sum_{\delta=0}^{L-1}e_{\pi,t+\delta}^{\bar d}(n)e_{m,t+\delta}^{\bar d}(n)
+\sum_{\delta=1}^{\tau-1}e_{G,t+\delta}^{\bar d}(n)e_{m,t+\delta}^{\bar d}(n).
\label{eq:single-regime-dr-rate}
\end{equation}
For each fold,
\((\widehat b_v^{\bar a}-\widehat b_v^{\bar b})^2
\leq2(\widehat b_v^{\bar a})^2+2(\widehat b_v^{\bar b})^2\).
Averaging \eqref{eq:single-regime-dr-rate} with the fold proportions
\(\{\rho_v\}_{v=1}^V\) and over the two regimes, then applying
Stability Condition~\ref{ass:generated-outcome-stability}, proves
\eqref{eq:dr-rate-direct}. Equation~\ref{eq:quasi-oracle-consequence}
follows immediately from \eqref{eq:quasi-oracle-condition}.

Cross-fitting is used here only to condition on \(\mathcal T_v\) while
analyzing its held-out fold. It does not make treatment, censoring, and
continuation errors evaluated at the same history independent; no such
independence is used in \eqref{eq:holder-derived-treatment-product} or
\eqref{eq:holder-derived-censoring-product}.
\end{proof}

\paragraph{Quasi-oracle consequence.}
In particular, if
\begin{equation}
B_\eta(n)=O\!\left\{e_{\mathrm{or}}^{\bar a,\bar b}(n)\right\},
\tag{67}
\label{eq:quasi-oracle-condition}
\end{equation}
then
\begin{equation}
\mathbb E\!\left[
\left\{\widehat\theta_{\mathrm{DR}}^{\bar a,\bar b}(\bar h_t)
-\theta^{\bar a,\bar b}(\bar h_t)\right\}^2
\right]
=O\!\left\{e_{\mathrm{or}}^{\bar a,\bar b}(n)\right\}.
\tag{68}
\label{eq:quasi-oracle-consequence}
\end{equation}
Thus the feasible DR learner has the same convergence rate as the
otherwise identical second-stage learner supplied with the oracle
pseudo-outcome, or equivalently with the true score nuisances.

The bound exhibits rate double robustness: continuation-regression error enters
only in product with treatment error for \(\delta=0,\ldots,L-1\) and with censoring
error for \(\delta=1,\ldots,\tau-1\). Hence faster convergence of either nuisance
block can compensate for slower convergence of its complementary block. As the
proof shows, this product structure is derived from the exact orthogonal
remainder, separate fourth-moment rates, and H\"older's inequality; it is not a
mixed-rate assumption. This is an oracle-rate comparison, not a claim of
semiparametric efficiency or asymptotic normality.

\section{Fully synthetic experiment}
\label{app:synthetic}
\label{sec:dgp-synthetic}

\subsection{Data-generating process}

We simulate i.i.d.\ patient trajectories over $K=32$ decision times, with the notation and
the within-interval ordering of the problem setting. The static covariates are
$W=(W_1,W_2,W_3)$ and the time-varying covariates $X_k=(S_k,R_k,B_k)$, which we interpret
as disease severity, physiological reserve, and accumulated treatment burden; treatment is
binary, $\mathcal{A}=\{0,1\}$. We write $\mathrm{expit}(x)=(1+e^{-x})^{-1}$. The mechanisms
below close the treatment--confounder feedback loop: severity raises the probability of
treatment, treatment lowers severity, and severity drives the event hazard.

\textbf{Covariates.} We draw $W_1\sim\mathcal{N}(0,1)$, $W_2\sim\mathrm{Bernoulli}(0.5)$,
and $W_3\sim\mathcal{N}(0,1)$, and initialize
\begin{align}
S_0 &= 0.55\,W_1 + 0.35\,W_2 + \eps^S_0, & \eps^S_0 &\sim \mathcal{N}(0, 0.45^2), \\
R_0 &= 0.45\,W_3 - 0.20\,W_1 + \eps^R_0, & \eps^R_0 &\sim \mathcal{N}(0, 0.40^2), \\
B_0 &= \eps^B_0, & \eps^B_0 &\sim \mathcal{N}(0, 0.15^2).
\end{align}
For patients who survive interval $k$ we update the covariates, writing $\nu=\kappa\phi$
for the feedback strength $\phi=0.8$,
\begin{align}
S_{k+1} &= 0.50\,S_k + 0.18\,S_{\max(0,k-1)} - 0.10\,R_k + 0.12\,\bar S^{(4)}_k
  + 0.10\,W_1 + 0.06\sin(k/3.5) \nonumber \\
&\quad + \nu\left[-A_k\left(0.34 + 0.12\,W_2 + 0.10\tanh\!\left(\bar S^{(4)}_k\right)\right)
  + 0.05\,A_{k-1}\right] + \eps^S_{k+1}, \\
R_{k+1} &= 0.56\,R_k + 0.16\,R_{\max(0,k-2)} - 0.10\,S_k - 0.07\,\bar S^{(4)}_k \nonumber \\
&\quad + 0.10\,W_3 - 0.05\,B_k - 0.08\,\nu\,A_k + \eps^R_{k+1}, \\
B_{k+1} &= 0.74\,B_k + \nu\left(0.65\,A_k + 0.15\,\bar A^{(4)}_k\right)
  + 0.04\tanh\!\left(\bar S^{(8)}_k\right) + \eps^B_{k+1},
\end{align}
with independent innovations $\eps^S_{k+1}\sim\mathcal{N}(0,0.28^2)$,
$\eps^R_{k+1}\sim\mathcal{N}(0,0.22^2)$, and $\eps^B_{k+1}\sim\mathcal{N}(0,0.08^2)$.

\textbf{History summaries.} The mechanisms depend on $\bar{H}_k$ through summaries that are
deliberately not first-order Markov in $X_k$, so that no learner can succeed by tracking
the current covariates alone. For a process $U$ we write, with indices truncated below at
zero,
\begin{equation}
\bar U^{(m)}_k = \frac{1}{|\mathcal{I}_m(k)|}\sum_{j \in \mathcal{I}_m(k)} U_j,
\quad \mathcal{I}_m(k) = \{\max(0, k-m+1), \dots, k\},
\qquad
\Delta U^{(3)}_k = U_k - U_{\max(0,\,k-3)},
\end{equation}
and for treatment we set $A_{-1}=0$ and use the recent and cumulative exposures
$\bar A^{(4)}_k$, the mean of $A_{\max(0,k-4)},\dots,A_{k-1}$, and
$\bar A_k=k^{-1}\sum_{j<k}A_j$, both zero at $k=0$. These summaries are internal to the
simulator: the learners receive only $\bar{H}_k$.

\textbf{Treatment, event, and censoring mechanisms.} Treatment is assigned by a logistic
model in which severity, its recent level, and its trend all raise the probability of
treatment, and past treatment is strongly persistent,
\begin{align}
\eta_k &= -0.10 + 0.62\,S_k - 0.25\,R_k - 0.12\,B_k
  + 0.28\,\bar S^{(4)}_k - 0.12\,\bar R^{(8)}_k + 0.18\,\Delta S^{(3)}_k \nonumber \\
  &\quad + 0.28\,W_2 + 0.45\,A_{k-1} + 0.22\,\bar A^{(4)}_k + 0.10\,\bar A_k
  + 0.12\sin(k/4), \\
\pi_k(1 \mid \bar{H}_k) &= \mathrm{clip}\!\left(\mathrm{expit}(\alpha\,\eta_k),\; c_\pi,\; 1 - c_\pi\right),
\end{align}
so that sequential positivity holds by construction with $\pi_k\in[c_\pi,1-c_\pi]$. The
effect of treatment varies with both baseline and time-varying history,
\begin{equation}
\begin{split}
\theta(\bar{H}_k) = \gamma\Big[&0.48 + 0.16\,W_2 + 0.18\tanh(S_k) + 0.14\tanh\!\left(\bar S^{(4)}_k\right) \\
  &- 0.10\,B_k - 0.08\,\bar A^{(4)}_k + 0.06\tanh\!\left(\Delta S^{(3)}_k\right)\Big],
\end{split}
\end{equation}
and, collecting the treatment contribution as
$\psi_k(a)=\kappa\left(-a\,\theta(\bar{H}_k) - 0.08\,A_{k-1} + 0.10\,a\,A_{k-1}\right)$,
the event is drawn from the one-step hazard
\begin{align}
\lambda_k(a) &= \mathbb{P}\!\left(T \in [k, k+1) \mid \bar{H}_k,\, A_k = a,\, E_k = 1\right) \nonumber \\
&= \mathrm{expit}\Big(\beta_0 + 0.035\,k + 0.58\,S_k - 0.30\,R_k + 0.16\,B_k
   + 0.22\,\bar S^{(4)}_k + 0.10\,\bar B^{(8)}_k \nonumber \\
&\qquad\qquad + 0.12\,\Delta S^{(3)}_k + 0.12\,W_1 + \psi_k(a)\Big).
\end{align}
Since $\theta$ is positive almost everywhere, treatment is protective and the size of its
benefit is modified by the history. Censoring is resolved on the successor history, so that
dropout depends on covariates realized after the landmark,
\begin{equation}
\begin{split}
G_j(\bar{H}_j) &= \mathbb{P}(C > j \mid \bar{H}_j,\, T > j,\, C > j-1) \\
  &= \mathrm{expit}\Big(c_0 - 0.28\,S_j + 0.16\,R_j - 0.10\,B_j
  - 0.12\,\bar S^{(4)}_j - 0.08\,A_{j-1} - 0.020\,j\Big).
\end{split}
\end{equation}
Censoring is therefore informative given $\bar{H}_t$ while remaining conditionally independent of the potential event time
given $\bar{H}_j$, as assumption (A4) requires. Table~\ref{tab:dgp-synthetic} lists the
parameters.

\begin{table}[ht]
\centering
\caption{Parameters of the synthetic data-generating process.}
\label{tab:dgp-synthetic}
\begin{tabular}{llr}
\toprule
Symbol & Description & Value \\
\midrule
$K$        & decision times                          & 32 \\
$\alpha$   & overlap strength                        & 0.75 \\
$\gamma$   & effect heterogeneity strength           & 3.0 \\
$\phi$     & treatment--confounder feedback strength & 0.8 \\
$\beta_0$  & event intercept                         & $-4.0$ \\
$c_0$      & censoring intercept                     & 5.2 \\
$\kappa$   & treatment effect scale                  & 1.0 \\
$c_\pi$    & propensity clip                         & 0.08 \\
\bottomrule
\end{tabular}
\end{table}

\subsection{Experiment}
\label{app:synthetic-experiment}

\textbf{Design.} Each repetition simulates three independent cohorts of 1{,}800 nuisance,
1{,}800 second-stage, and 1{,}200 test patients, so no patient contributes to more than one
cohort, and we repeat the whole experiment five times with independent seeds. We predict at
the landmark $t=10$ and compare treatment at each of the first $L$ decisions with no
treatment at any of them, both followed by the natural course, for
$(L,\tau)\in\{(2,2),(2,3),(2,4),(3,8)\}$. Among test patients at risk at the landmark,
50.1\% are treated at the landmark, 3.9\% and 14.6\% have an event within two and eight
steps, and 1.7\% and 5.3\% are censored within the same windows. The share of patients
whose recorded treatment matches the all-ones sequence over the intervention window is
27.2\% at $L=2$ and 11.5\% at $L=3$, and 25.9\% and 13.3\% for the all-zeros sequence, so
support is markedly scarcer at the longest setting.

\textbf{True effects.} For each test patient we simulate 1{,}000 futures under each
sequence from the patient's landmark history, with the same random numbers for both
sequences and without censoring, and take the mean difference in survival as the true
effect. Its standard deviation across patients is 0.037--0.042 depending on the setting.

\textbf{Learners and training.} The nuisance cohort fits the treatment and censoring models,
the regime-specific continuation regressions during the intervention window, and a shared
post-intervention continuation. The second-stage cohort fits the second-stage regressions of
RA, IPW, and DR, and the TV-SurvCaus model. Training pools all landmarks at which the
horizon fits within follow-up. All models use the same causal Transformer: one input
projection over the concatenated history, fixed sinusoidal positions, a single causal
encoder layer, and a two-layer ReLU head, with $W$ repeated across time and entering the
same projection. The continuation and second-stage models use width 30, three heads,
feed-forward width 20, and dropout 0.1; the treatment and censoring models use width 40,
four heads, and no dropout. We train with Adam at learning rate $10^{-3}$ and batch size
64, for at most 30 epochs with early stopping on a 15\% patient-level validation split of
each cohort, restoring the checkpoint with the lowest validation loss, and we never use the
test cohort for any selection. The $L_2$ penalty is 0 for the nuisances and 0.01 for the
second stages. Before inversion, treatment and censoring-continuation probabilities are
floored at 0.03, and cumulative inverse-probability products are truncated at 20 for
numerical stability.
TV-SurvCaus \citep{abraich2025tv} is our adaptation of the method to the same Transformer
and to finite interventions.

\textbf{Results.} The full RMSE comparison is given in Table~\ref{tab:synthetic-main};
Table~\ref{tab:synthetic} reports the paired comparison. DR has the lowest mean RMSE in every
setting, and the 95\% interval of its paired difference with the best competing learner lies
below zero in every case except $(L,\tau)=(3,8)$, where the best competitor is PI and the
interval includes zero. The PI, RA, and TV-SurvCaus predictions have a standard deviation
across patients of at most 0.025, against 0.037--0.042 for the true effect, and they
underestimate the average effect by 0.025--0.046; TV-SurvCaus is the most extreme, with a
spread of 0.001--0.005. Their correlation with the true effect is at most 0.25, and is
negative for PI in the three shorter settings, whereas DR attains 0.38--0.74 with a mean
absolute bias of at most 0.016. IPW is nearly unbiased for the average effect in the
shorter settings but is heavily over-dispersed, with a prediction spread of 0.097--0.261
against the same true spread, which is why its RMSE is the largest throughout; at $(3,8)$
its effective sample size falls to 116 of 1{,}200 eligible patients. Refitting the DR and
IPW second stages with the true
treatment and censoring probabilities of the simulation, keeping the fitted continuation
regressions and other stabilization choices fixed, changes the mean DR error by at most 0.008 with a
paired interval that includes zero in all four settings. The same substitution lowers the
mean IPW error by 0.010--0.057, although its paired interval lies below zero only at
$(3,8)$.

\begin{table}[ht]
\centering
\small
\setlength{\tabcolsep}{4pt}
\begin{tabular}{lccc}
\toprule
$(L,\tau)$ & Best competitor & DR $-$ best competitor & Paired 95\% interval \\
\midrule
$(2,2)$ & TV-SurvCaus & $-0.0272$ & $[-0.0360, -0.0183]$ \\
$(2,3)$ & PI & $-0.0167$ & $[-0.0323, -0.0011]$ \\
$(2,4)$ & PI & $-0.0157$ & $[-0.0302, -0.0012]$ \\
$(3,8)$ & PI & $-0.0042$ & $[-0.0181, 0.0097]$ \\
\bottomrule
\end{tabular}
\caption{Paired difference in RMSE between DR and the competing learner with the lowest mean on the fully synthetic benchmark, with its paired 95\% $t$-interval over five repetitions; a negative value favors DR.}
\label{tab:synthetic}
\end{table}

\section{Semi-synthetic experiment}
\label{app:semisynthetic}

The benchmark keeps each patient's recorded ICU history up to the landmark and simulates everything after it. As the history is real, the learners must work with heterogeneous and partly missing clinical measurements, and as the future is simulated, the true effect is known for every test patient.

\textbf{Cohort.} We use MIMIC-IV version 3.1 \citep{johnson2023mimic} and place the landmark 24 hours after ICU admission. An ICU stay is eligible if the patient is at least 18 years old at admission, is still in the ICU at the landmark, and has not died in hospital before it. We compute age as the anchor age plus the years between the anchor year and the admission year, and we keep the earliest eligible stay of each patient, which leaves 54{,}375 patients. We select 6{,}000 of them with a hash of the patient identifier, and a second hash assigns 40\%, 40\%, and 20\% of the selected patients to the nuisance, second-stage, and test cohorts, which hold 2{,}380, 2{,}422, and 1{,}198 patients. Neither hash depends on any measurement or outcome. The median age is 66 years (interquartile range 55--77), and 43.2\% of the patients are female. We repeat the study five times on the same patients with new simulation seeds and report the RMSE of $\widehat\theta$ over the test patients.

\textbf{History before the landmark.} We read heart rate (item 220045), mean arterial pressure (items 220052 and 220181, from arterial and non-invasive measurement), and respiratory rate (item 220210) from the \texttt{chartevents} table. We pick the value at the later of its chart time and its store time. We drop records that lack either time, that were charted before ICU admission, or that became available after ICU discharge, and we keep values in the expected unit within plausible ranges, which are 20--250 beats per minute for heart rate, 15--250 mmHg for mean arterial pressure, and 1--80 breaths per minute for respiratory rate. We split the first 24 hours into eight bins of three hours and place each record in the bin that contains its availability time, so a value enters a bin only if it was available before the bin ends. The value of a bin is the mean of its records. Depending on the variable, 93--95\% of bins hold a measurement, and fewer than 0.6\% of patients have no measurement of a variable in the 24 hours.

We fit all preprocessing on the 2{,}023 nuisance-training patients remaining after the 15\% validation split, and never on second-stage or test patients. A missing first bin takes the training median, and a later missing bin takes the last observed value. We standardize each variable with its training mean and standard deviation over all bins and clip at $\pm5$. The static covariates are standardized age, a female indicator, and the standardized mean of the patient's respiratory-rate history.

\textbf{Simulated follow-up.} The eight real bins become the first eight states of the synthetic data-generating process, so the landmark is the decision time $t=7$. We map heart rate, mean arterial pressure, and respiratory rate to the first, second, and third state coordinates, and age, sex, and mean respiratory rate to the first, second, and third baseline covariates. We set every treatment before the landmark to zero, as the simulated treatment has no counterpart in the records. From the landmark on, the structural equations and parameters of the synthetic study generate treatment $A_k$, the event in $[k,k+1)$, the next state $X_{k+1}$, and censoring at $k+1$, for $k=7,\ldots,31$. At the landmark these equations read the real history through its last three states, its four- and eight-step means, and its three-step trend, so patients with different recorded histories have different treatment probabilities, event risks, and treatment effects. Censoring depends on the next state, its recent mean, and the last treatment, which makes it informative given the landmark history but ignorable given the full history, as (\ref{ass:sequential-ignorability}) requires. Averaged over the five repetitions, 52\% of patients in the nuisance and second-stage cohorts receive treatment at the landmark, 2.8\% and 12.8\% have an event within two and eight steps, and 4.9\% are censored within eight steps.

\textbf{True effects.} At the landmark we compare treatment at each of the first $L$ decisions with no treatment at any of them, both followed by the natural course, and evaluate survival $\tau$ steps after the landmark, for $(L,\tau)\in\{(2,2),(2,3),(2,4),(3,8)\}$. For each test patient we simulate 1{,}000 futures under each sequence from the patient's 24-hour history, with the same random numbers for both sequences and without censoring, and take the mean difference in survival as the true effect.

\textbf{Learners and training.} The nuisance cohort fits the treatment and censoring models, the regime-specific continuation regressions during the intervention window, and a shared post-intervention continuation. The second-stage cohort fits the second-stage regressions of RA, IPW, and DR, and the TV-SurvCaus model. Training pools all landmarks from $t=7$ on. Each test patient contributes only the 24-hour history, with all later values removed before prediction. All learners use the same causal Transformer, optimizer, and early-stopping setup as in the fully synthetic experiment (App.~\ref{app:synthetic-experiment}). Before inversion, treatment and censoring-continuation probabilities are floored at 0.03, and cumulative inverse-probability products are truncated at 10 for numerical stability. We use the same TV-SurvCaus adaptation \citep{abraich2025tv}.

\textbf{Results.} The full RMSE comparison is given in Table~\ref{tab:semisynthetic-main}; Table~\ref{tab:semisynthetic} reports the paired comparison. DR has the lowest mean RMSE in every setting, and the 95\% interval of its paired difference with the best competing learner lies below zero in all four settings. The PI, RA, and TV-SurvCaus predictions have a standard deviation across patients of at most 0.011, against 0.050--0.059 for the true effect, and they underestimate the average effect by 0.035--0.051. The DR predictions have a correlation of 0.36--0.64 with the true effect, against at most 0.30 for PI, RA, and TV-SurvCaus, and a mean absolute bias of at most 0.024.

\begin{table}[ht]
\centering
\small
\setlength{\tabcolsep}{4pt}
\begin{tabular}{lccc}
\toprule
$(L,\tau)$ & Best competitor & DR $-$ best competitor & Paired 95\% interval \\
\midrule
$(2,2)$ & TV-SurvCaus & $-0.0273$ & $[-0.0357, -0.0190]$ \\
$(2,3)$ & RA & $-0.0199$ & $[-0.0342, -0.0055]$ \\
$(2,4)$ & PI & $-0.0159$ & $[-0.0278, -0.0040]$ \\
$(3,8)$ & PI & $-0.0160$ & $[-0.0315, -0.0004]$ \\
\bottomrule
\end{tabular}
\caption{Paired difference in RMSE between DR and the competing learner with the lowest mean on the semi-synthetic MIMIC-IV benchmark, with its paired 95\% $t$-interval over five repetitions; a negative value favors DR.}
\label{tab:semisynthetic}
\end{table}

\subsection{Nuisance-misspecification experiment}
\label{app:semisynthetic-misspecification}

To see whether the advantage of DR comes from its double robustness, we misspecify one nuisance block at a time on the semi-synthetic benchmark. When we replace the treatment and censoring probabilities with their pooled rates, which ignore the history, the error of IPW grows between five- and twelvefold, while the error of DR does not increase for $L=2$ and doubles for $L=3$ and $\tau=8$. When we instead lower the survival predicted by the continuation regressions of the treated arm by 0.10, the error of RA doubles and the error of DR does not increase, with either the fitted or the true probabilities. When both nuisance blocks are misspecified, the error of DR increases in every setting. Most of the gain of DR over PI and RA comes from removing their average bias, which accounts for 59--76\% of the difference in mean squared error.

\subsection{Censoring and treatment-assignment stress test}
\label{app:semisynthetic-stress}

We stress-test the learners along three dimensions that make longitudinal survival-effect estimation increasingly challenging. We use the same patients and preprocessing as in the main semi-synthetic experiment, draw new simulation seeds, and evaluate $(L,\tau)=(2,2)$ and $(3,8)$. For this stress test, cumulative inverse-probability products are truncated at 20.

\begin{enumerate}
    \item \textbf{History-dependent censoring.}
    We vary the amount of censoring to assess robustness to increasingly severe selective loss to follow-up. For both treatment-assignment mechanisms below, we consider settings in which approximately 0\%, 20\%, or 40\% of patients are censored before the eight-step horizon.

    \item \textbf{Treatment-assignment strength.}
    We vary how strongly treatment depends on the evolving patient history. The default assignment uses the treatment logit of the synthetic study scaled by $0.75$, whereas the \emph{sharp} assignment scales it by $2.0$. The latter induces stronger treatment selection and more challenging overlap while keeping treatment probabilities within $[0.08,0.92]$.

    \item \textbf{Event frequency.}
    We additionally consider a setting with more frequent survival events, obtained by increasing the event intercept from $-4$ to $-2$, together with 20\% censoring. This tests performance when substantially fewer patients remain event-free over follow-up.
\end{enumerate}

Crossing the two treatment-assignment mechanisms with the three censoring levels gives six conditions, and the higher-event setting gives a seventh. We calibrate the censoring intercept by bisection on uncensored simulated futures of 1{,}024 nuisance patients, yielding 3.65 for 20\% censoring, 2.78 for 40\%, and 3.24 in the higher-event condition. In the nuisance and second-stage cohorts, the resulting censoring rates before the eight-step horizon are 19.6\% and 39.4\% under the default assignment, 19.9\% and 40.1\% under the sharp assignment, and 20.4\% in the higher-event condition. Censoring does not alter the true effect. In contrast, the sharp assignment changes the natural-course treatment distribution after the intervention and therefore the true effect at $(3,8)$, while the higher event rate changes the true effect at both horizons.

Table~\ref{tab:semisynthetic-stress} reports the results across these stress conditions. At the short horizon $(2,2)$, DR has the lowest RMSE in all seven settings, including under increased censoring, sharper treatment assignment, and the higher event rate; the 95\% intervals of its paired differences with PI, RA, and the best competing learner all lie below zero. At the longer horizon $(3,8)$, DR has the lowest mean RMSE in five of the seven settings, although its paired interval against the best competing learner lies below zero only under the default assignment with 40\% censoring. RA has the lowest mean in the remaining two settings---default assignment with 20\% censoring and sharp assignment with 40\% censoring---but in both cases its difference from DR is not statistically resolved by the paired interval. IPW has the largest error at $(3,8)$ in every condition and deteriorates particularly under sharp treatment assignment.

\begin{table}[ht]
\centering
\small
\setlength{\tabcolsep}{3pt}
\begin{tabular}{llccccccl}
\toprule
Setting & Cens. & PI & RA & IPW & TV-SurvCaus & DR & Rel. improv. & DR $-$ best competitor \\
\midrule

\multicolumn{9}{l}{\textit{Short horizon: $(L,\tau)=(2,2)$}} \\
\addlinespace[1pt]
Default & 0\%  & 0.069 & 0.071 & 0.141 & 0.073 & \textbf{0.041} & \textbf{+40.6\%} & $-0.0273$\,{\scriptsize$[-0.0403, -0.0143]$} {\scriptsize(PI)} \\
        & 20\% & 0.071 & 0.074 & 0.131 & 0.073 & \textbf{0.042} & \textbf{+40.8\%} & $-0.0292$\,{\scriptsize$[-0.0431, -0.0153]$} {\scriptsize(PI)} \\
        & 40\% & 0.073 & 0.075 & 0.178 & 0.073 & \textbf{0.041} & \textbf{+43.8\%} & $-0.0318$\,{\scriptsize$[-0.0357, -0.0278]$} {\scriptsize(TV)} \\
\addlinespace[2pt]

Sharp   & 0\%  & 0.074 & 0.078 & 0.293 & 0.073 & \textbf{0.044} & \textbf{+39.7\%} & $-0.0291$\,{\scriptsize$[-0.0368, -0.0213]$} {\scriptsize(TV)} \\
        & 20\% & 0.071 & 0.074 & 0.311 & 0.073 & \textbf{0.041} & \textbf{+42.3\%} & $-0.0296$\,{\scriptsize$[-0.0331, -0.0262]$} {\scriptsize(PI)} \\
        & 40\% & 0.073 & 0.075 & 0.353 & 0.073 & \textbf{0.051} & \textbf{+30.1\%} & $-0.0215$\,{\scriptsize$[-0.0323, -0.0107]$} {\scriptsize(TV)} \\
\addlinespace[2pt]

Higher event & 20\% & 0.141 & 0.149 & 0.155 & 0.257 &
\textbf{0.085} & \textbf{+39.7\%} & $-0.0560$\,{\scriptsize$[-0.0774, -0.0345]$} {\scriptsize(PI)} \\

\midrule

\multicolumn{9}{l}{\textit{Long horizon: $(L,\tau)=(3,8)$}} \\
\addlinespace[1pt]
Default & 0\%  & 0.071 & 0.068 & 0.184 & 0.075 & \textbf{0.056} & \textbf{+17.6\%} & $-0.0115$\,{\scriptsize$[-0.0339, 0.0108]$} {\scriptsize(RA)} \\
        & 20\% & 0.068 & \textbf{0.066} & 0.235 & 0.074 & 0.078 & $-18.2\%$ & $+0.0120$\,{\scriptsize$[-0.0126, 0.0366]$} {\scriptsize(RA)} \\
        & 40\% & 0.068 & 0.068 & 0.211 & 0.075 & \textbf{0.049} & \textbf{+27.9\%} & $-0.0186$\,{\scriptsize$[-0.0296, -0.0076]$} {\scriptsize(RA)} \\
\addlinespace[2pt]

Sharp   & 0\%  & 0.070 & 0.066 & 0.477 & 0.071 & \textbf{0.057} & \textbf{+13.6\%} & $-0.0086$\,{\scriptsize$[-0.0204, 0.0031]$} {\scriptsize(RA)} \\
        & 20\% & 0.070 & 0.067 & 0.495 & 0.073 & \textbf{0.060} & \textbf{+10.4\%} & $-0.0068$\,{\scriptsize$[-0.0248, 0.0113]$} {\scriptsize(RA)} \\
        & 40\% & 0.072 & \textbf{0.070} & 0.564 & 0.071 & 0.076 & $-8.6\%$ & $+0.0053$\,{\scriptsize$[-0.0128, 0.0234]$} {\scriptsize(RA)} \\
\addlinespace[2pt]

Higher event & 20\% & 0.088 & 0.082 & 0.160 & 0.119 &
\textbf{0.077} & \textbf{+6.1\%} & $-0.0048$\,{\scriptsize$[-0.0387, 0.0291]$} {\scriptsize(RA)} \\

\bottomrule
\end{tabular}

\caption{RMSE of the estimated HSTE under the stress-test conditions,
averaged over five repetitions (lower is better; lowest mean
in bold). \emph{Default} and \emph{Sharp} denote the standard and stronger
history-dependent treatment-assignment mechanisms, respectively. Cens.
denotes the target proportion censored before the eight-step horizon;
\emph{Higher event} denotes the increased-event-rate condition.
Rel. improv. is the percentage reduction in RMSE of DR relative to the
best competing learner; positive values favor DR. The last column gives the paired difference in RMSE between DR and the competing learner with the lowest mean (named in parentheses, with TV for TV-SurvCaus), with its 95\% $t$-interval over the five repetitions; a negative value favors DR.}
\label{tab:semisynthetic-stress}
\end{table}

\section{Real-world experiment}
\label{app:realworld}

As real data contain no counterfactual outcomes, we cannot evaluate an estimated effect against counterfactual ground truth. We can still check whether the survival a learner predicts under one treatment sequence agrees with what happened to the patients who received that sequence. \citet{frauen2025modelagnostic} compare predictions with the outcomes of such patients directly, whereas we follow \citet{keogh2024prediction} and \citet{boyer2025estimating} and also weight each patient by the inverse probability of following the sequence, which makes the score refer to all patients at the landmark and not only to those who followed it. MIMIC-IV also records deaths after a patient leaves the ICU, and we therefore know the survival at each horizon even for patients whose ICU record ends in censoring.

\textbf{Cohort and sequence.} We take the 8{,}243 adults in MIMIC-IV version 3.1 \citep{johnson2023mimic} who were still in the ICU 24 hours after admission and had a vasopressor (norepinephrine, epinephrine, vasopressin, phenylephrine, or dopamine) running in the three hours before. We split time into three-hour steps from ICU admission. The history at each step contains heart rate, systolic, diastolic, and mean arterial pressure, respiratory rate, oxygen saturation, temperature, the norepinephrine rate, and the number of vasopressors running, together with age and sex, and we use each measurement only from the time it was charted. A patient receives treatment at a step if any vasopressor infusion runs during it. Death is the event and leaving the ICU alive censors the record, and as patients leave the ICU when their condition improves, the censoring depends on the history. At the 24-hour landmark we predict survival under continuing vasopressors for the next two steps ($L=2$) and evaluate it 24 and 48 hours later ($\tau=8,16$). We assign 40\%, 40\%, and 20\% of the patients to the nuisance, second-stage, and test cohorts using a fixed hash of the patient identifier. The learners, their training, and the stabilization choices are the same as in the semi-synthetic study, and we fit each learner with five seeds.

\textbf{Evaluation.} In the held-out test cohort, 1{,}285 of the 1{,}655 patients have a known outcome and continued vasopressors through both steps or until death, and 140 of them died within 48 hours. As the decision to continue depends on the history, we weight each of these patients by the inverse of their estimated probability of continuing, which we obtain from logistic regressions fitted on the test cohort with two-fold cross-fitting \citep{keogh2024prediction}. Patients who stop contribute nothing, and because every stop happens before the horizon, dropping them is the same as censoring them when they stop. The weighted Brier score then measures the error of the predicted survival under continued vasopressors for all test patients at the landmark. The weights stay close to one, and none exceeds 4.4. Of the 1{,}285 patients, 224 left the ICU alive within 48 hours, and the death records show that 221 of them were still alive at 48 hours. We cannot score the sequence that stops vasopressors in the same way, because only 23 patients in the whole cohort stopped and died within 48 hours.

\textbf{Results.} Table~\ref{tab:realworld-main} gives the full weighted Brier comparison on the held-out test cohort, and Table~\ref{tab:realworld-app} gives the paired comparison. At 48 hours DR has the lowest weighted Brier score and the paired interval of its difference with the best competing learner lies below zero, while at 24 hours DR and PI are tied and both score better than RA, IPW, and TV-SurvCaus. The mean prediction of DR is within 2.6 percentage points of the weighted observed survival on the held-out test cohort, whereas PI and RA overpredict survival at 48 hours, by 5.6 and 6.7 points. PI and RA rely only on continuation regressions that we fit backwards one step at a time, and small upward errors at each step add up over a long horizon. The weighted correction in DR removes more than half of the excess. The check concerns the predicted survival under one sequence and not the effect itself, and it also relies on leaving the ICU being explained by the recorded history \citep{vangeloven2020prediction}, which would not hold for a transfer out of the ICU for end-of-life care.

\begin{table}[ht]
\centering
\small
\setlength{\tabcolsep}{4pt}
\begin{tabular}{lccc}
\toprule
Horizon & Best competitor & DR $-$ best competitor & Paired 95\% interval \\
\midrule
24 hours ($\tau=8$) & PI & $-0.0001$ & $[-0.0013, 0.0012]$ \\
48 hours ($\tau=16$) & IPW & $-0.0038$ & $[-0.0059, -0.0018]$ \\
\bottomrule
\end{tabular}
\caption{Paired difference in weighted Brier score between DR and the competing learner with the lowest mean under continued vasopressors for the next two steps ($L=2$) in the held-out MIMIC-IV test cohort. The 95\% bootstrap intervals over test patients are computed on seed-averaged predictions with the weights held fixed; a negative value favors DR.}
\label{tab:realworld-app}
\end{table}

\section{Computational resources}
\label{app:computational-resources}

The fully synthetic experiments were run on an NVIDIA GeForce RTX 4060 GPU.
The semi-synthetic experiments, including the nuisance-misspecification and
stress-test studies, were run on a single NVIDIA H200 GPU per run, requiring
approximately 5 minutes per run. The real-world experiments were run on a
single NVIDIA RTX PRO 5000 Blackwell GPU with 48\,GB of memory, requiring
approximately 2--4 minutes per seed.

\end{document}

%% file: math_commands.tex
\usepackage{amsmath,amsfonts,bm}

\def\eqref#1{equation~\ref{#1}}

\def\1{\bm{1}}

\def\eps{{\epsilon}}

\DeclareMathAlphabet{\mathsfit}{\encodingdefault}{\sfdefault}{m}{sl}
\SetMathAlphabet{\mathsfit}{bold}{\encodingdefault}{\sfdefault}{bx}{n}

